\def\eqref#1{equation~\ref{#1}}
\def\1{\bm{1}}
\DeclareMathAlphabet{\mathsfit}{\encodingdefault}{\sfdefault}{m}{sl}
\SetMathAlphabet{\mathsfit}{bold}{\encodingdefault}{\sfdefault}{bx}{n}
\newcommand{\R}{\mathbb{R}}
\title{Recovering Plasticity of Neural Networks via Soft Weight Rescaling}
\author{Seungwon Oh\thanks{Equal contribution}, \; Sangyeon Park\footnotemark[1], \; Isaac Han, \ Kyung-Joong Kim\thanks{Corresponding author} \\
Department of AI Convergence\\
Gwangju Institute of Science and Technology (GIST)\\
Gwangju, South Korea \\
\texttt{\{sw980907@gm.,tkddus0421@gm.,lssac7778@gm.,kjkim@\}gist.ac.kr} \\
}
\begin{document}

\maketitle

\begin{abstract}

Recent studies have shown that as training progresses, neural networks gradually lose their capacity to learn new information, a phenomenon known as plasticity loss. An unbounded weight growth is one of the main causes of plasticity loss. Furthermore, it harms generalization capability and disrupts optimization dynamics. Re-initializing the network can be a solution, but it results in the loss of learned information, leading to performance drops. In this paper, we propose Soft Weight Rescaling (SWR), a novel approach that prevents unbounded weight growth without losing information. SWR recovers the plasticity of the network by simply scaling down the weight at each step of the learning process. We theoretically prove that SWR bounds weight magnitude and balances weight magnitude between layers. Our experiment shows that SWR improves performance on warm-start learning, continual learning, and single-task learning setups on standard image classification benchmarks.

\end{abstract}

\section{Introduction}

Recent works have revealed that a neural network loses its ability to learn new data as training progresses, a phenomenon known as plasticity loss. 
A pre-trained neural network shows inferior performance compared to a newly initialized model when trained on the same data \citep{ash2020warm, berariu2021study}. 
\citet{lyle2024disentangling} demonstrated that unbounded weight growth is one of the main causes of plasticity loss and suggested weight decay and layer normalization as solutions. 
Several recent studies on plasticity loss have proposed weight regularization methods to address this issue \citep{kumar2023maintaining, lewandowski2023curvature, elsayed2024weight}. 
Unbounded weight growth is a consistent problem in the field of deep learning; 
it is problematic not only for plasticity loss but also undermines the generalization ability of neural networks \citep{golowich2018size, zhang2021understanding} and their robustness to distribution shifts. 
Increasing model sensitivity, where a small change in the model input leads to a large change in the model output, is also closely related to the magnitude of the weights.
Therefore, weight regularization methods are widely used in various areas of deep learning and have been consistently studied.

Weight regularization methods have been proposed in various forms, including additional loss terms \citep{krogh1991simple, kumar2023maintaining} and re-initialization strategies \citep{ash2020warm, li2020rifle, taha2021knowledge}.
The former approach adds an extra loss term to the objective function, which regularizes the weights of the model. 
These approaches are used not only to penalize large weights but also for other purposes, such as knowledge distillation \citep{shen2024step}.
However, they can cause optimization difficulties or conflict with the main learning objective, making it harder for the model to converge effectively \citep{ghiasi2024improving}.
\citet{liu2021improve} also proved that the norm penalty of a family of weight regularizations weakens as the network depth increases.
Moreover, such methods require additional gradient computations, resulting in slower training.
In addition, several studies argued that regularization methods could be problematic with normalization layers. 
For instance, weight decay destabilizes optimization in weight normalization \citep{li2020understanding}, and interferes learning with batch normalization \citep{lyle2024disentangling}, both of which can hinder convergence.
On the other hand, re-initialization methods are aimed at resetting certain parameters of the model during training to escape poor local minima and encourage better exploration of the loss landscape.
\citet{zaidi2023does} demonstrated that re-initialization methods improve generalization even with modern training protocols.
While re-initialization methods improve generalization ability, they raise the problem of losing knowledge from previously learned data \citep{zaidi2023does, ramkumar2023learn, lee2024slow, shin2024dash}.
It leads to a notable performance drop, especially problematic when access to the previous data is unavailable.

In this paper, we propose a novel weight regularization method that has advantages of both of those two approaches.
Our method, Soft Weight Rescaling (SWR), directly reduces the weight magnitudes close to the initial values by scaling down weights.
With a minimal computational overhead, it effectively prevents unbounded weight growth.
Unlike previous methods, SWR recovers plasticity without losing information.
In addition, our theoretical analysis proves that SWR bounds weight magnitude and balances weight magnitude between layers. 
We evaluate the effectiveness of SWR on standard image classification benchmarks across various scenarios—including warm-start learning, continual learning, and single-task learning—comparing it with other regularization methods and highlighting its advantages, particularly in the case of VGG-16.

The contributions of this work are summarized as follows. 
First, We introduce a novel method that effectively prevents unbounded weight growth while preserving previously learned information and maintaining network plasticity.
Second, we provide a theoretical analysis demonstrating that SWR bounds the magnitude of the weights and balances the weight magnitude across layers without degrading model performance.
Finally, we empirically show that SWR improves generalization performance across various learning scenarios.

The rest of this paper is organized as follows.
Section \ref{sec:related_works} reviews studies on weight magnitude and regularization methods.
In Section \ref{sec:method}, we explain weight rescaling and propose a novel regularization method, Soft Weight Rescaling.
Then, in Section \ref{sec:experiments}, we evaluate the effectiveness of Soft Weight Rescaling by comparing it with other regularization methods across various experimental settings.

\section{Related Works}
\label{sec:related_works}

\textbf{Unbounded Weight Growth. }
There have been studies associated with the weight magnitude. 
\citet{krogh1991simple, bartlett1996valid} indicated that the magnitude of weights is related to generalization performance.
Besides, as the magnitude of the weights increases, the Lipschitz constant also tends to grow \citep{couellan2021coupling}. 
This leads to higher sensitivity of the network, potentially affecting its stability and generalization.
\citet{ghiasi2024improving} demonstrated that weight decay plays a role in reducing sensitivity for noise.
Moreover, \citet{lyle2024disentangling} claimed that unbounded weight growth is one of the factors of plasticity loss in training with non-stationary distribution. 
These studies indicate that enormous weight magnitudes disturb effective learning.
Unfortunately, weight growth is inevitable in deep learning.
\citet{neyshabur2017exploring} showed that when the training error converges to 0, the weight magnitude gets unbounded. 
\citet{merrill2020effects} observed that weight magnitude increases with $O(\sqrt{t})$, where $t$ is the update step during transformer training.
These explanations highlight the ongoing need for weight regularization in modern deep learning.

\textbf{Weight Regularization. }
Various methods have been proposed to regularize the weight magnitude.
L2 regularization, which is also termed as weight decay, is a method to apply an additional loss term that penalizes the L2 norm of weight.
Although it is a method widely used, several studies pointed out its problems \citep{ishii2018layer, liu2021improve}.
\citet{yoshida2017spectral} suggested regularizing the spectral norm of the weight matrix and showed improved generalization performance in various experiments.
\citet{kumar2020implicit} regularized the weights to maintain the effective rank of the features.
On the other hand, several studies have explored how to utilize the initialized weights.
\citet{kumar2023maintaining} imposed a penalty on L2 distance from initial weight and \citet{lewandowski2023curvature} proposed using the empirical Wasserstein distance to prevent deviating from initial distribution.
However, these methods require additional gradient computations.

\textbf{Re-initialization methods.}
\citet{ash2020warm} demonstrated that a pre-trained neural network achieves reduced generalization performance compared to a newly initialized model. 
The naive solution is to initialize models and train again from scratch whenever new data is added, which is very inefficient.
Based on the idea that higher layers learn task-specific knowledge, methods that re-initialize the model layer by layer, such as resetting the fully-connected layers only \citep{li2020rifle}, have been proposed.
To explore a more efficient approach, several attempts have been made to re-initialize the subnetwork of the model \citep{han2016dsd, taha2021knowledge, ramkumar2023learn, sokar2023dormant}.
In particular, \citet{ramkumar2023learn} calculated the weight importance and re-initialized the task-irrelevant parameters.
\citet{sokar2023dormant} proposed to reset dormant nodes which do not influence the model.
However, these methods pose a new drawback in additional computational cost.
On the other hand, there have been presented weight rescaling methods that leverage initial weight.
\citet{alabdulmohsin2021impact} proposed the Layerwise method which rescales the first $ t $ blocks to have their initial norms and re-initializes all layers after $ t $-th layer, for the training stage $ t $.
More recently, \citet{niehaus2024weight} introduced the Weight Rescaling method, which rescales weight to enforce the standard deviation of weight to initialization.
The limitation of these two weight rescaling methods is that they depend on the model architecture and require to find a proper rescaling interval.

\section{Method}
\label{sec:method}

\newtheorem{definition}{Definition}
\newtheorem{theorem}{Theorem}

In this section, we introduce the proportionality of neural networks to explain a weight regularizing method that preserves the behavior of the model. 
Next, we demonstrate that our method, SWR, regularizes learnable parameters while satisfying the property. 
Finally, we will discuss the reason for the importance of the proportionality and advantage of SWR that improves model balancedness. 

\subsection{Notations}
Let $f_\theta$ be a neural network with $L$ layers and activation function $\phi$, where the input $x \in \displaystyle \R^m$ and the output $z\in \displaystyle \R^n$. 
The set of learnable parameters is denoted by $\theta$, comprising the weight matrices $W_l$ and bias vectors $b_l$ of the $l$-th layer.
Let $a_l$ represent the vector of activation outputs of the $l$-th layer, and $z_l$ the pre-activation outputs before applying the activation function.
The final output of the network $z=f_\theta(x)$ is obtained recursively as follows:
\begin{align*}
    a_0 &\doteq x \\
    z_{i} &= W_{i} a_{i-1} + b_{i}, \quad i \in \{1,...,L-1\} \\
    a_{i} &= \phi(z_i) , \quad i \in \{1,...,L-1\}\\ 
    z &= W_La_{L-1} + b_L,
\end{align*}
where $z_L = z$.

For convenience, the norm expression of a matrix will be considered an element-wise L2 norm, which is known as the Frobenius norm: $ \|W\| \doteq \|W\|_F = \sqrt{\sum_i\sum_j |w_{ij}|^2}, $
where $w_{ij}$ represents an element of the matrix $W$. Additionally, we consider multiplying a constant by a matrix or vector as element-wise multiplication.

\subsection{Weight Rescaling}
Previous studies have suggested regularizing the magnitude or spectral norm by multiplying the parameters by a specific constant \citep{huang2017projection,ash2020warm,gogianu2021spectral,gouk2021regularisation,niehaus2024weight}.
However, rescaling the weights can alter the behavior of models, except in specific cases (e.g. a neural network without biases).
It is clear that when a constant is multiplied by the weight matrix and bias of the final layer, the network output will be scaled accordingly. 
However, it becomes complicated when the scaling constant varies across layers.
To resolve this complexity, we demonstrate in Theorem \ref{thm:ScaledOutputNetwork} that it is possible to avoid decreasing the model’s accuracy by employing a specific scaling method. 
We will first outline the relevant properties in the form of Definition \ref{def:proportionality}.
\begin{definition} [\emph{Proportionality of neural network}]
\label{def:proportionality}
Let the neural network $f_{\theta'}$ have the same input and output dimension with $f_\theta$. 
Then, we say that $f_{\theta'}$ and $f_\theta$ are proportional if and only if 
\[f_{\theta'}(x) = k \cdot f_\theta(x)\]
for a real constant $k$ and all input data $x$. 
We refer to the constant $k$ as the proportionality constant of $f_\theta$ and $f_{\theta'}$.
\end{definition}
We investigated the following theorem shows that it is always possible to construct a proportional network for any arbitrary neural network.
\begin{theorem} 
\label{thm:ScaledOutputNetwork}
Let $f_\theta$ be a feed-forward neural network with affine, convolution layers, and homogeneous activation functions (e.g. ReLU, Leaky ReLU, etc.). 
For any positive real number $C$, we can find infinitely many networks that are proportional to $f_\theta$ with proportionality constant $C$.

\end{theorem}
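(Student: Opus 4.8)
The plan is to construct a proportional network explicitly by rescaling each layer with its own positive constant, and then verify the claim by induction on the layer index, relying only on the positive homogeneity of $\phi$. Recall that a homogeneous activation satisfies $\phi(\lambda z) = \lambda\,\phi(z)$ for every $\lambda > 0$ (this holds for ReLU and Leaky ReLU), and this is the sole property of $\phi$ I would use.

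First I would introduce a free positive scalar $\lambda_i$ for each layer and define the rescaled parameters
\[ W_i' = \frac{\lambda_i}{\lambda_{i-1}}\, W_i, \qquad b_i' = \lambda_i\, b_i, \]
with the convention $\lambda_0 = 1$ (the input $a_0 = x$ is never rescaled) and $\lambda_L = C$. The intermediate constants $\lambda_1,\dots,\lambda_{L-1}$ are free, while $\lambda_0$ and $\lambda_L$ are pinned by the input and the target proportionality constant, respectively.

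Next I would prove by induction that the rescaled network satisfies $z_i' = \lambda_i z_i$ and $a_i' = \lambda_i a_i$ at every hidden layer $i$. The base case uses $a_0' = a_0 = x$, giving $z_1' = W_1' a_0 + b_1' = \lambda_1 (W_1 x + b_1) = \lambda_1 z_1$, and homogeneity then yields $a_1' = \phi(\lambda_1 z_1) = \lambda_1 a_1$. For the inductive step,
\[ z_i' = W_i' a_{i-1}' + b_i' = \frac{\lambda_i}{\lambda_{i-1}} W_i \,(\lambda_{i-1} a_{i-1}) + \lambda_i b_i = \lambda_i (W_i a_{i-1} + b_i) = \lambda_i z_i, \]
so that the telescoping ratio $\lambda_i/\lambda_{i-1}$ is exactly what cancels the accumulated scaling of the incoming activation, and homogeneity again gives $a_i' = \lambda_i a_i$. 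At the output layer the same computation with $\lambda_L = C$ produces $z_L' = \tfrac{C}{\lambda_{L-1}} W_L (\lambda_{L-1} a_{L-1}) + C b_L = C z_L$, i.e. $f_{\theta'}(x) = C f_\theta(x)$ for all $x$, which is precisely Definition~\ref{def:proportionality} with proportionality constant $C$. Since a convolution is bilinear in its kernel and its input, the identical telescoping argument covers convolution layers verbatim, with $W_i$ reinterpreted as the kernel.

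Finally, to obtain infinitely many such networks I would note that the intermediate factors $\lambda_1,\dots,\lambda_{L-1}$ range over a continuum of positive reals, and (assuming $L \ge 2$ with at least one nonzero hidden weight) distinct choices yield genuinely distinct parameter sets $\theta'$. The main point to be careful about is not a deep obstacle but a bookkeeping one: the bias must be scaled by $\lambda_i$ rather than by the weight ratio $\lambda_i/\lambda_{i-1}$, so that the entire pre-activation $z_i'$ scales by a single factor, and one must insist $\lambda_i > 0$ so that positive homogeneity applies — a negative factor would break the identity $\phi(\lambda z) = \lambda\,\phi(z)$ for ReLU-type nonlinearities.
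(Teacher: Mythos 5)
Your proof is correct and is essentially the paper's own argument: your cumulative scalers $\lambda_l$ are exactly the paper's partial products $\prod_{i=1}^{l} c_i$ (so $W_l' = (\lambda_l/\lambda_{l-1})W_l = c_l W_l$ and $b_l' = \lambda_l b_l$ coincide with the paper's construction), and both verifications are the same homogeneity-based telescoping, yours organized as forward induction rather than the paper's backward unrolling. Your remarks on distinctness of the resulting parameter sets and on requiring positive scalers are minor refinements of the same proof, not a different route.
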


We will briefly explain how to find the network that is proportional to $f_\theta$. 
Let a network that has $L$ layers be $f_\theta$, and a set $c = \{c_1, c_2, \dots, c_L\}$ consisting of positive real numbers such that $C=\Pi_{i=1}^L c_i$. 
Then, construct the new parameter set $\theta^c \doteq \{W_1^c, b_1^c, \dots W_L^c, b_L^c\}$ by rescaling parameters with the following rules:
    \[W_l^c \leftarrow c_l \cdot W_l, \quad b_l^c \leftarrow \left(\prod_{i=1}^l c_i \right) \cdot b_l\]
    Then, for all input $x$, it satisfies $f_{\theta^c}(x) = Cf_\theta(x)$.
A detailed proof can be found in Appendix \ref{app:proof_thm1}.

In the following, scaled network $f_{\theta^c}$, final cumulative scaler $C$, and the scaler set $c$ will refer to the definitions provided above. 
Note that Theorem \ref{thm:ScaledOutputNetwork} indicates that two proportional neural networks have identical behavior in classification tasks.
This suggests that scaling the bias vectors according to a certain rule allows for regularization without affecting the model's performance. 
It remains the same for the case of any homogeneous layer, such as max-pooling or average-pooling.

\begin{figure}[!h]
\centering
    \begin{subfigure}[t]{.40\linewidth}
        \centering\captionsetup{width=.95\linewidth}%
        \includegraphics[width=\linewidth]{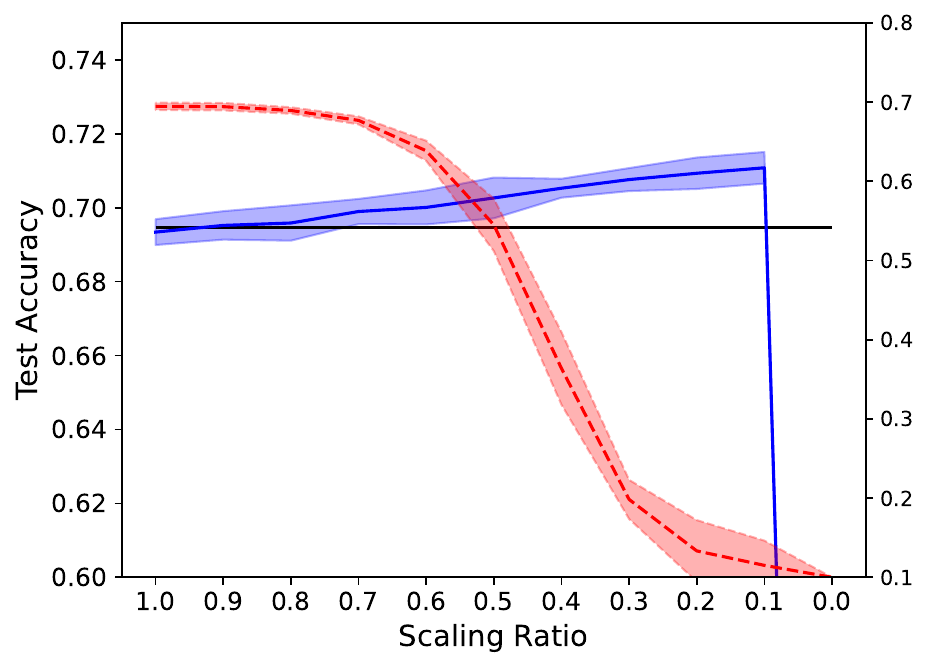}
    \end{subfigure}
    \begin{subfigure}[t]{.40\linewidth}
        \centering\captionsetup{width=.95\linewidth}%
        \includegraphics[width=\linewidth]{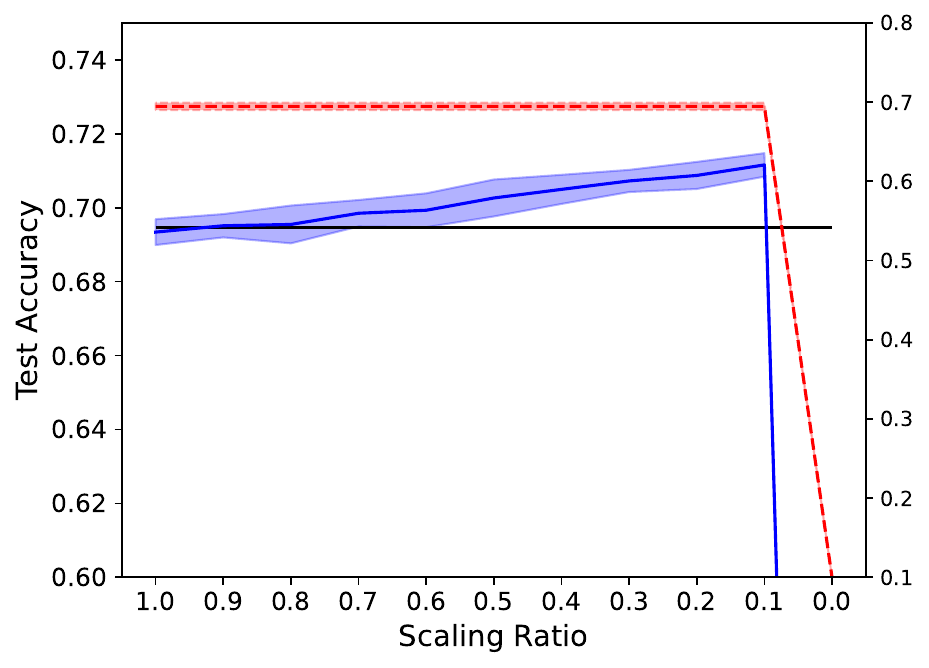}
    \end{subfigure}
    \begin{subfigure}[t]{.60\linewidth}
        \centering\captionsetup{width=.95\linewidth}%
        \includegraphics[width=\linewidth]{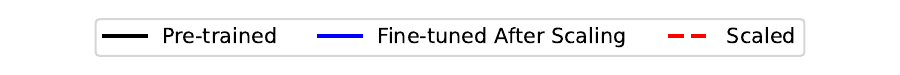}
    \end{subfigure}

    \caption{
    \textbf{An illustrative comparison of the proportionality. }
    The left figure shows the results of weight scaling without considering proportionality, while the right figure shows the results when proportionality is accounted for.
    The dashed line represents the test accuracy right after scaling, and the solid lines represent the best test accuracy achieved through additional training.
    All results are averaged over 5 runs on the CIFAR-10 dataset.
    }
    \label{fig:robustness}
\end{figure}

An example illustrating the effect of the proportionality is shown in Fig. \ref{fig:robustness}.
The left figure represents the outcomes of weight scaling without taking proportionality into account, and the right represents the results when proportionality is considered.
Two scaling approaches are compared across different scaling magnitudes on the CIFAR-10 dataset \citep{krizhevsky2009learning}.
The black horizontal line denotes the best test accuracy achieved during training over 100 epochs, and the blue line represents the best test accuracy during an additional 50 epochs of training.
All scaling methods outperformed the best accuracy of the pre-trained model (black), indicating that the scaling method can address the overfitting issue.
However, it is notable that considering proportionality as Theorem \ref{thm:ScaledOutputNetwork} maintains its test accuracy perfectly across all scaling ratios, as indicated by the red line.
In contrast, the performance of the opposite exhibits a decline as the scaling magnitude increases.
However, as mentioned above, there are infinitely many ways to rescale parameters. In the following section, we will discuss how to determine the scaler set $c$.

\subsection{Soft Weight Rescaling}
Selecting different scaling factors per layer becomes impractical as the number of layers increases.
In this subsection, we propose a novel method for effectively scaling parameters; the scaling factor of each layer depends on the change rate of the layer.
We define the rate of how much the model has changed from the initial state as the ratio between the Frobenius norm of the current weight matrix and that of the initial one.
Therefore, the scaling factor of the $l$-th layer is $c_l = \|W_l^\text{init}\| / \|W_l\|$. 
This ensures that the magnitude of the layer remains at the initial value, and may constrain the model, forcing the weight norm to remain unchanged from the initial magnitude.
Since the initial weight norm is small in most initialization techniques, the model may lack sufficient complexity \citep{neyshabur2015norm}.
To address this limitation, we alleviate the scaling factor as follows:
\begin{align*}
    c_l = \frac{\lambda \times \|W_l^\text{init}\| + (1 - \lambda) \times \|W_l\|}{\|W_l\|}
\end{align*}

With an exponential moving average (EMA), models can deviate from initialization smoothly while still regularizing the model. 
While this modification breaks hard constraints for weight magnitude, the algorithm still prevents unlimited growth of weight.
We presented the proof of the boundedness of the weight magnitude in {Appendix \ref{app:boundedness}}.

It is natural to question whether Theorem \ref{thm:ScaledOutputNetwork} can also be applied to networks that utilize commonly used techniques such as batch normalization \citep{ioffe2015batch} or layer normalization \citep{ba2016layer}, due to their scale-invariant property (which is, if $g$ is a function of normalization layer, for input $x$, $g(cx) = g(x)$ for $\forall c>0$).
However, this property implies that we only need to focus on the learnable parameters of the final normalization layer to maintain the proportionality. 
The algorithm, including the normalization layer, is provided in Algorithm \ref{alg:SWR}.
For simplicity, we denote the scale and shift parameters of the normalization layer as $W$ and $b$ just like a typical layer, and in the case of layers without a bias vector (e.g. like the convolution layer right before batch normalization), we consider bias as the zero constant vector.

\begin{algorithm}[h]
\caption{Soft Weight Rescaling}
\label{alg:SWR}

\begin{algorithmic}

\State \textbf{Given: } Data stream $\mathcal{D}$, neural network $f_\theta$ with learnable parameters $\{(W_1, b_1), \dots, (W_L, b_L)\}$.
\State \textbf{Initialize: } step size $\alpha$, coefficient $\lambda$
\State $n_{l}^\text{init} \leftarrow \|W_l\|, l \in \{1,\dots,L\}$
\State $k \gets \begin{cases}
  \text{Index of final normalization layer}, & \text{if network has normalization layer} \\
  0, & \text{otherwise}
\end{cases}$

\For{$(x, y)$ in $\mathcal{D}$}
    \State $\theta \leftarrow \text{Parameters after } \textbf{Gradient update } \text{for } (x,y)$ \Comment{e.g. update with CrossEntropyLoss}
    \State $C \leftarrow 1$ \Comment{variable to calculate cumulative scaler}
    \For{$l$ in $\{1,2,\dots,L\}$}
        \State $c_l \leftarrow \frac{\lambda n_l^\text{init} + (1-\lambda) \|W_l\|}{\|W_l\|}$
        \State $C \gets \begin{cases}
        c_l \cdot C & \text{if } l\geq k   \\ 
        c_l & \text{otherwise}
        \end{cases}$  \Comment{cumulate scalers from last normalization layer}
        \State $(W_l, b_l) \leftarrow (c_l \cdot W_l, C \cdot b_l)$
    \EndFor
\EndFor

\end{algorithmic}

\end{algorithm}

It is notable that SWR scales the weights preceding the final normalization layer, while they do not affect the scale of the output. 
However, each of them has a distinct role.
First, for convolution layers, the scalers control the effective learning rate which has been studied in previous research \citep{van2017l2,zhang2018three,andriushchenko2023we}. 
Second, for the normalization layer, \citet{lyle2024normalization} mentioned that unbounded parameters in normalization layers may cause issues in non-stationary environments such as continual or reinforcement learning.
Although \citet{summers2019four} demonstrated regularization for scale and shift parameters is only effective in specific situations, we also regularize scale and shift parameters, since our experiments focused on non-stationary environments and we observed that weights on several models diverged during training.
Due to the different roles of regularization for each type of layer, we split the coefficient $\lambda$ into two parts in the experiments. 
Henceforth, we denote the coefficient for the classifier as $\lambda_c$ and the coefficient applied to the feature extractor (before the classifier) as $\lambda_f$.

\subsection{SWR for Improved Balancedness}
One of the advantages of SWR is that it aligns the magnitude ratios between layers. 
\citet{neyshabur2015path, liu2021improve} have mentioned that when the balance between layers is not maintained, it has a significant negative impact on subsequent gradient descent. 
Although \citet{du2018algorithmic} argued that the balance between layers is automatically adjusted during training for the ReLU network, \citet{lyle2024disentangling} showed that in non-stationary environments, it is common for layers to grow at different rates. 
Weight decay cannot resolve this issue, since when the magnitude of a specific layer increases, the regularization effect on other layers is significantly reduced \citep{liu2021improve}. 
However, SWR, which applies regularization to each layer individually, is not affected by this issue.
We will show that using SWR at every update step makes the model balanced and illustrate empirical results with a toy experiment in Appendix \ref{app:balancedness}.

\section{Experiments}
\label{sec:experiments}

In this section, we evaluate the effectiveness of SWR, comparing with other weight regularization methods. 
In all experiments, we used various models and datasets to compare results across different environments. 
For relatively smaller models, such as a 3-layer MLP and a CNN with 2 convolutional layers and 2 fully connected layers, we used MNIST \citep{deng2012mnist}, CIFAR10 and CIFAR100\citep{krizhevsky2009learning} datasets, which is commonly used in image classification experiment. 
To verify the effect of combining batch normalization, we additionally used a CNN-BN, which is CNN with batch normalization layers.
For an extensive evaluation, we consider VGG-16 \citep{simonyan2014very} with the TinyImageNet dataset \citep{le2015tiny}.
In all the following experiments, we compared our method with two weight regularizations, L2 \citep{krogh1991simple} and L2 Init \citep{kumar2023maintaining}, as well as two re-initialization methods, Head Reset \citep{nikishin2022primacy} and S\&P \citep{ash2020warm}. 
Detailed experimental settings, including hyperparameters for each method, are in Appendix \ref{app:experimental_setup}.

\subsection{Warm-Starting}
We use a warm starting setup from \citet{ash2020warm} to evaluate whether SWR can close the generalization gap. 
In our setting, models are trained for 100 epochs with 50\% of training data and trained the entire training dataset for the following 100 epochs. 
Re-initialization methods are applied once before the training data is updated with the new dataset.

\begin{figure}[!h]
    \centering
    \begin{subfigure}[b]{\textwidth}
        \includegraphics[width=0.33\textwidth]{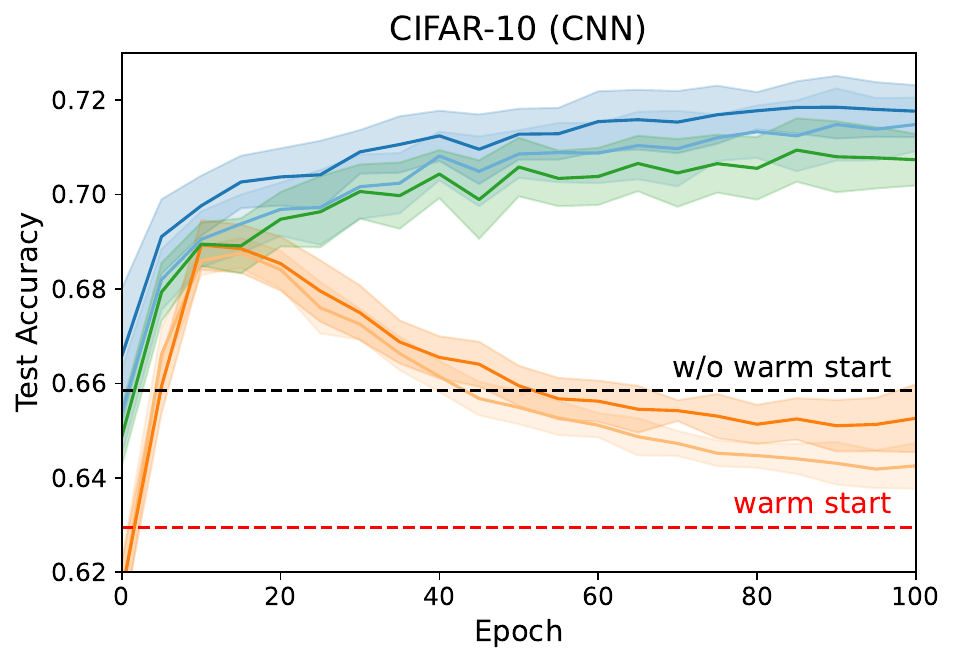}
        \includegraphics[width=0.33\textwidth]{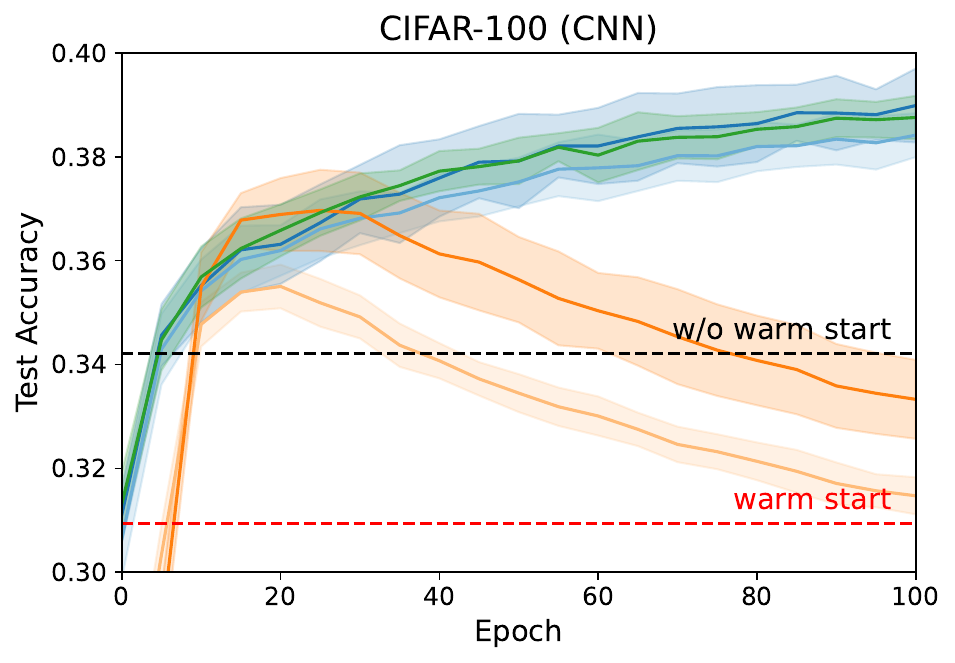}
        \includegraphics[width=0.33\textwidth]{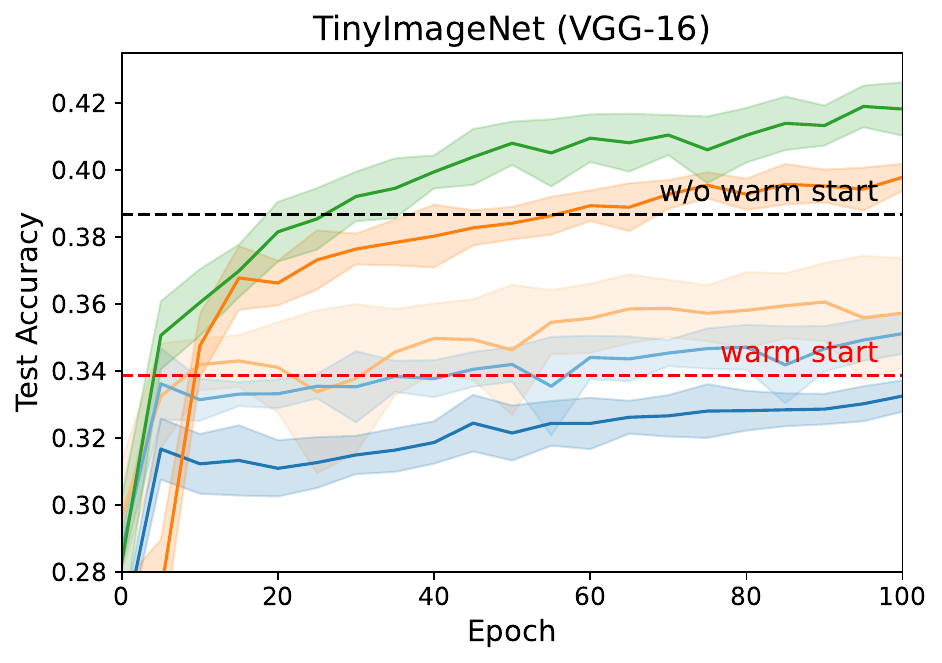}
        
    \end{subfigure}

    \begin{subfigure}[b]{0.6\textwidth}
        \centering
        \includegraphics[width=\textwidth]{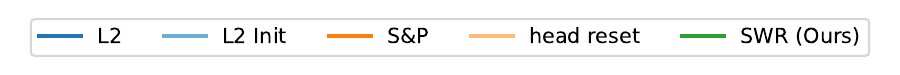}
    \end{subfigure}
    
    \caption{\textbf{Results on warm-starting.} This figure shows the test accuracy after training half of the data with 100 epochs. The dashed lines represent the final test accuracy with and without warm-start, respectively.}
    \label{fig:warm_start}
\end{figure}

Fig. \ref{fig:warm_start} shows the test accuracy over the 100 epochs after the dataset was added.
The dashed line indicates the final accuracy of the model without applying any regularization. 
The red line represents the warm-start scenario, and the black line shows the model trained from scratch for 100 epochs.
Weight regularization methods such as L2 regularization and L2 Init, generally exceed the accuracy of without warm-starting in most small models, but it brings no advantage for larger models like VGG-16. 
Re-initialization methods, S\&P and resetting the last layer, perform well, occasionally surpassing the performance of models without warm-start in VGG-16. 
Conversely, in smaller models, they yield only marginal improvements, suggesting that using either re-initialization or regularization methods in isolation fails to fully address warm-start challenges.

However, regardless of the model size, SWR exhibited either comparable or better performance compared to other methods. 
In the case of VGG-16, while other regularization techniques failed to overcome the warm-start condition, SWR surpassed the test accuracy of S\&P, which achieved the highest performance among the other methods. 
This indicates that with proper weight regularization, models may get more advantages than with methods that reset parts of the model. We leave the additional results for the warm start in the Appendix \ref{app:additional_results}.

\subsection{Continual Learning}
\label{exp:continual_learning}
In the earlier section, we examined the impact of SWR on the generalization gap and observed considerable advantages.
This subsection aims to verify whether a model that is repeatedly pre-trained can continue to learn effectively.
Similar to the setup provided by \citet{shen2024step}, the entire data is randomly split into 10 chunks, and the training process consists of 10 stages.
At each stage $k$, the model gains additional access to the $k$-th chunk.
This allows us to evaluate how effectively each method can address the generalization gap when warm starts are repeated.

\begin{figure}[!h]
    \centering
    \begin{subfigure}[b]{\textwidth}
        \includegraphics[width=0.33\textwidth]{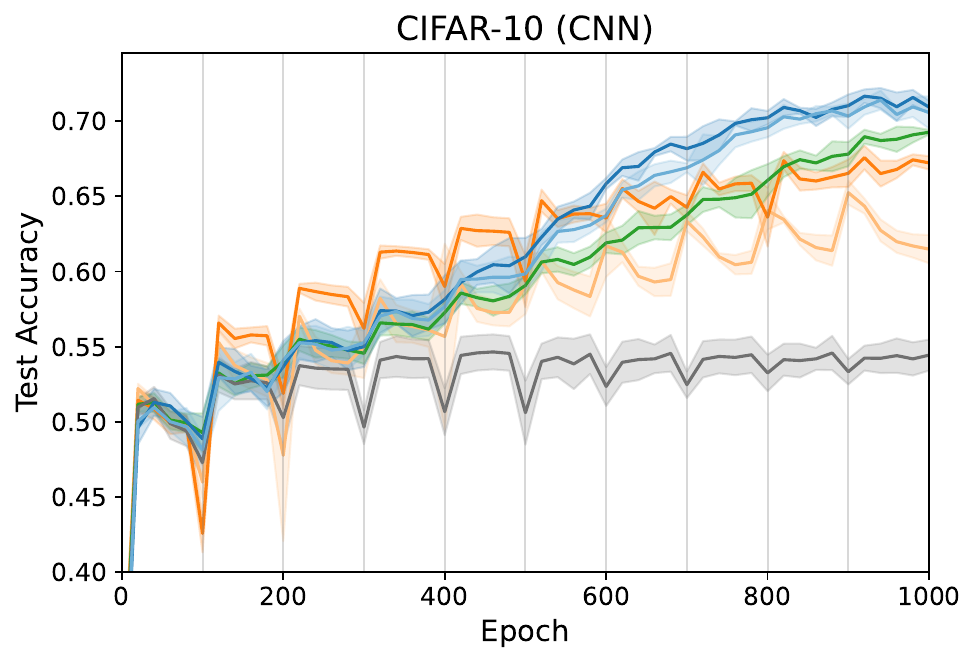}
        \includegraphics[width=0.33\textwidth]{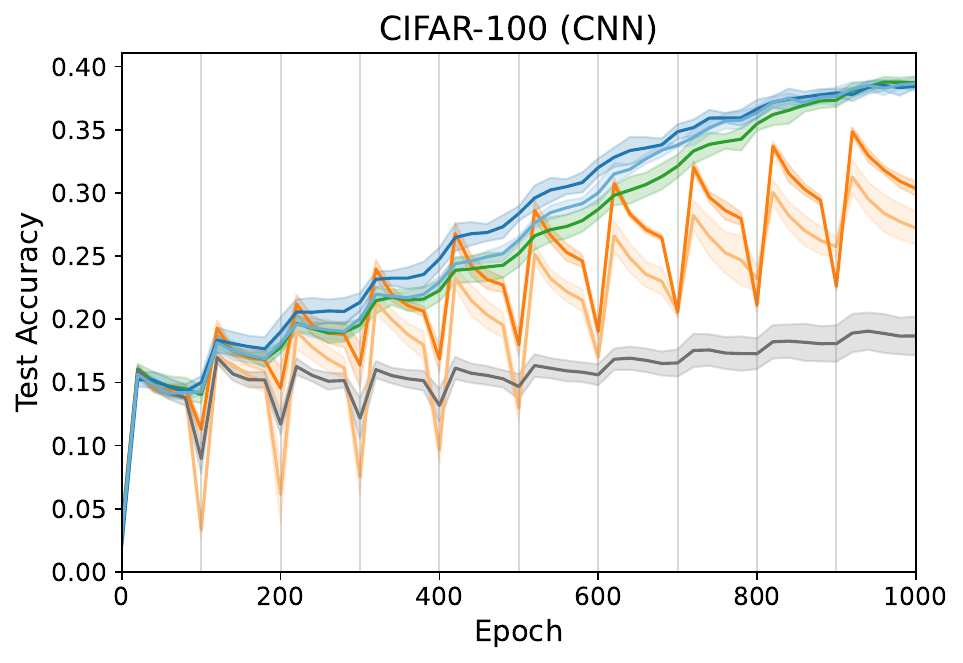}
        \includegraphics[width=0.33\textwidth]{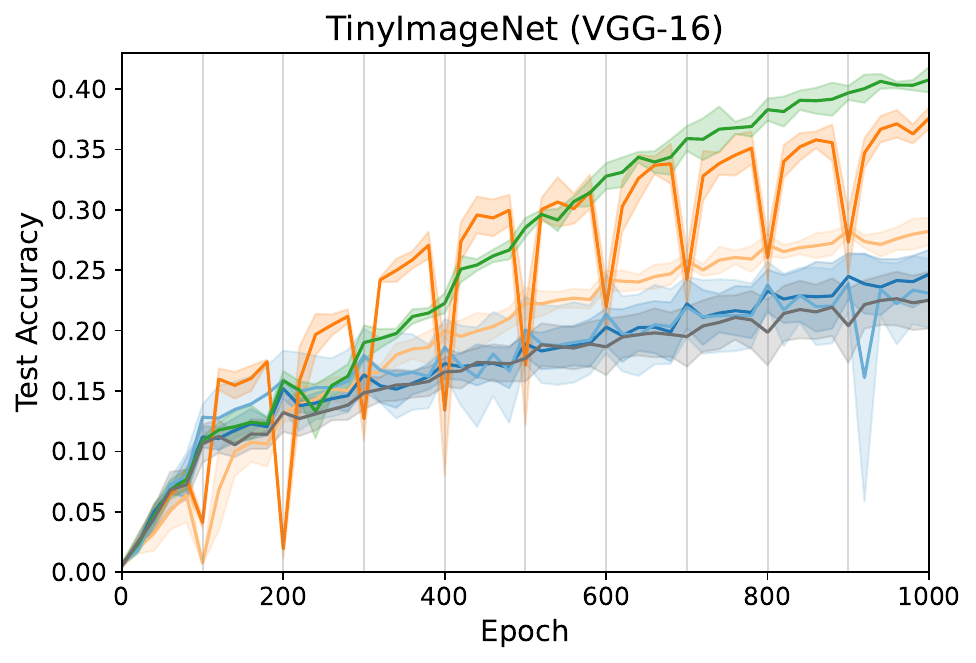}
        
    \end{subfigure}

    \begin{subfigure}[b]{0.6\textwidth}
        \centering
        \includegraphics[width=\textwidth]{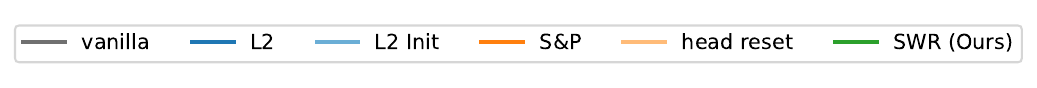}
    \end{subfigure}
    
    \caption{\textbf{Results on continual full access setting.} The test accuracy with training 10 chunks. For each chunk, the model is trained for 100 epochs and once the chunk completes training, it gets accumulated into the next chunk.}
    \label{fig:continual}
\end{figure}

As shown in Fig. \ref{fig:continual}, the result exhibits a similar behavior as warm-start. 
The regularization methods steadily improve performance during the entire training process for relatively small models. 
The Re-init methods also achieve higher performance than the vanilla model, but it is inevitable to experience a performance drop immediately after switching chunks and applying those methods. 
For a larger model, VGG-16, re-initializing weights is more beneficial for learning future data than simply regularizing weights. 
However, from the mid-phase of training, SWR begins to outperform S\&P without losing performance. 
It shows that re-initialization provides significant benefits in the early stages of training, it becomes evident that well-regularized weights can offer greater advantages for future performance. 

Although S\&P showed comparable effectiveness, such re-initialization methods lead to a loss of previously acquired knowledge. 
This phenomenon not only incurs additional costs for recovery but also presents critical issues when access to previous data is limited. 
In order to assess whether SWR can overcome these challenges, we modified the configuration; at the $k$-th stage, the model is trained only on the $k$-th chunk of data.
This limited access setting restricts the model's access to previously learned data and is widely used to assess catastrophic forgetting.

\begin{figure}[!h]
    \centering
    \begin{subfigure}[b]{\textwidth}
        \includegraphics[width=0.33\textwidth]{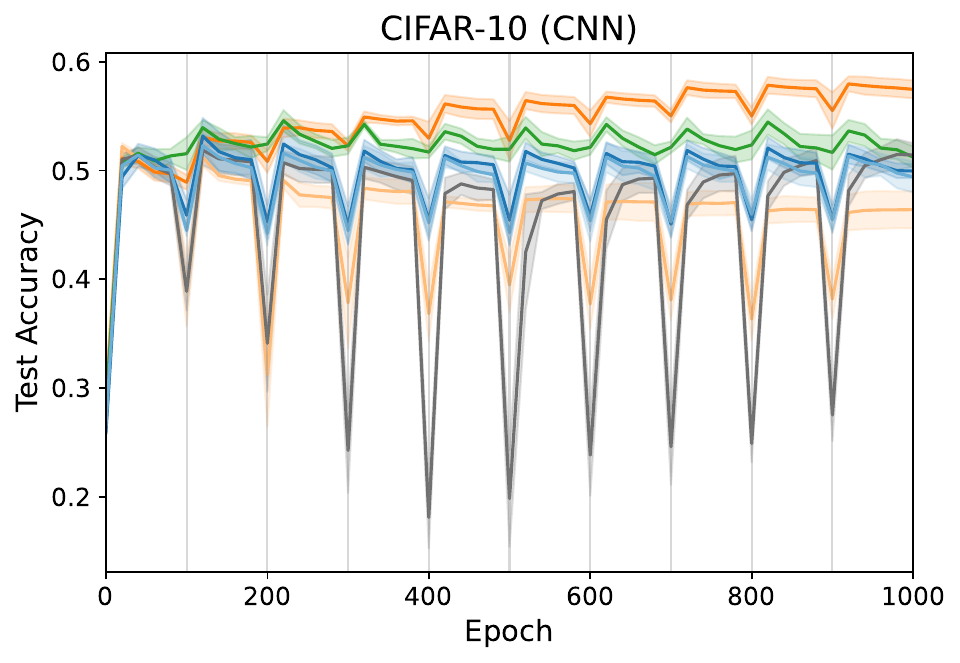}
        \includegraphics[width=0.33\textwidth]{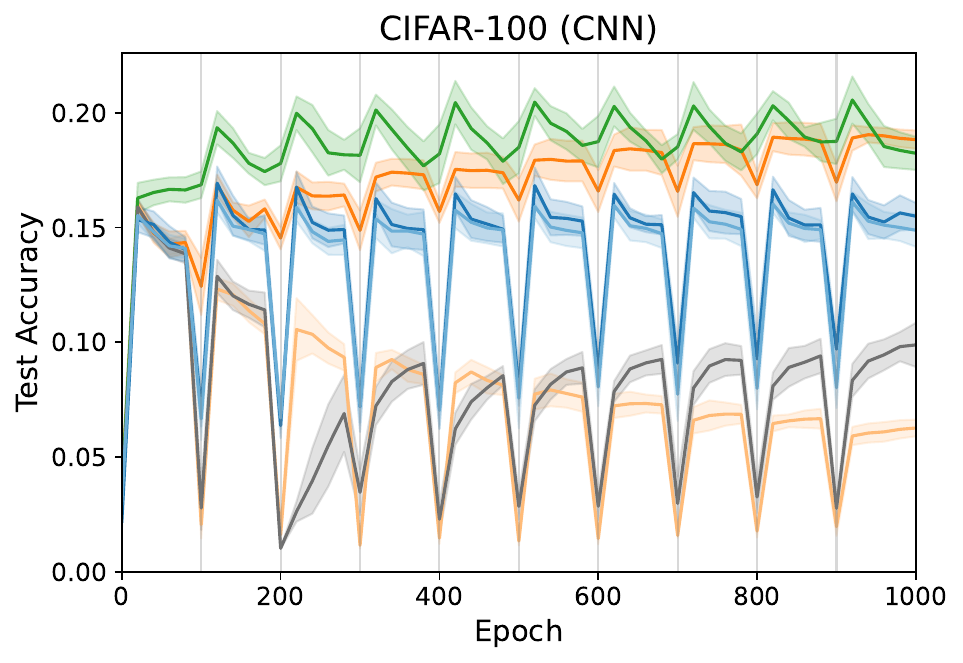}
        \includegraphics[width=0.33\textwidth]{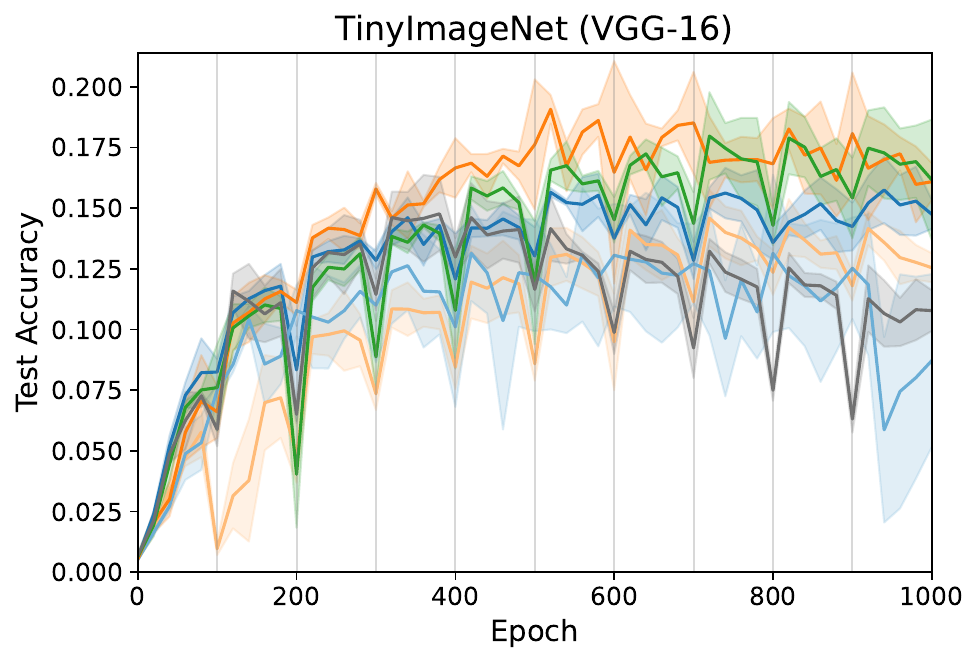}
        
    \end{subfigure}

    \begin{subfigure}[b]{0.6\textwidth}
        \centering
        \includegraphics[width=\textwidth]{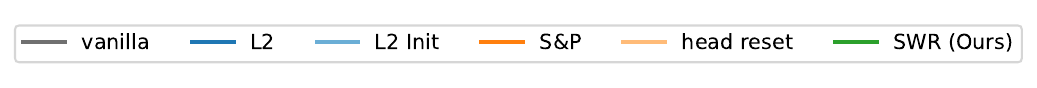}
    \end{subfigure}
    
    \caption{\textbf{Results on continual limited setting.} The test accuracy with training 10 chunks. For each chunk, the model is trained for 100 epochs and it cannot be accessed when training the next chunk. }
    \label{fig:limited}
\end{figure}

As shown in Fig. \ref{fig:limited}, we observe that, with CNN networks, SWR loses less test accuracy than other regularization methods when the chunk of training data changes. 
For VGG-16, SWR maintained test accuracy without a decrease at each stage. 
Although at risk of losing knowledge, S\&P demonstrates competitive performance with other regularization methods. 
This suggests that, while re-initialization and re-training can demonstrate competitive performance in some cases, the risk of losing previously acquired knowledge should not be overlooked. 
SWR, by contrast, mitigates this risk and maintains stability in test accuracy across stages. 
Further investigation is needed to explore the specific circumstances under which re-initialization may offer benefits despite the risk of information loss.
Additional results for other models and datasets are provided in Appendix \ref{app:additional_results}.

\subsection{Generalization}
\label{exp:generalzation}
To evaluate the impact of SWR not only on plasticity but also on standard generalization performance, we conducted experiments in a standard supervised learning setting. 
We trained the models for a total of 200 epochs with a learning rate, 0.001. 
The final test accuracy is shown in Table. \ref{tab:generalization}.
SWR outperformed other regularization methods across most datasets and models.
Notably, in larger models such as VGG-16, where other regularization techniques offered minimal performance gains, SWR achieved an improvement of over 4\% in test accuracy. 
This indicates that more effective methods for regulating parameters exist beyond conventional techniques like weight decay, commonly employed in supervised learning.

\begin{table}[ht]
    
    \centering
    \resizebox{\columnwidth}{!}{%
        \begin{tabular}{l|ccccc}
          & MNIST           & CIFAR-10    & CIFAR-100   & CIFAR-100  & TinyImageNet  \\Method&(MLP)&(CNN)&(CNN)&(CNN-BN)&(VGG-16) \\ \hline 
        vanilla     & $0.9789 \pm 0.0009$ &$0.6500 \pm 0.0083$ & $0.3283 \pm 0.0067$ & $0.3234 \pm 0.0053$ & $0.3912 \pm 0.0142$ \\
        L2          & $0.9795 \pm 0.0019$ & $0.7119 \pm 0.0037$ & $0.3882 \pm 0.0064$ & $\mathbf{0.4222 \pm 0.0043}$ & $0.3915 \pm 0.0108$ \\
        L2 Init    & $0.9793 \pm 0.0016$ & $0.7041 \pm 0.0125$ & $0.3881 \pm 0.0050$ & $0.4030 \pm 0.0105$ & $0.3870 \pm 0.0143$ \\
        SWR (Ours) & $\mathbf{0.9822 \pm 0.0024}$ & $\mathbf{0.7158 \pm 0.0063}$ & $\mathbf{0.3914 \pm 0.0070}$ & $0.4129 \pm 0.0105$ & $\mathbf{0.4348 \pm 0.0025}$ \\
        \end{tabular}%
    }
    \caption{\textbf{Results on generalization.} The final test accuracy with training 200 epochs with a learning rate of $0.001$. SWR achieves comparable or even higher performance than other simple regularization methods in stationary image classification.}
    \label{tab:generalization}
\end{table}

To verify whether SWR works effectively with learning rate schedulers commonly used in supervised learning, we conducted additional experiments where the learning rate decays at specific epochs. 
Detailed results are provided in Appendix \ref{app:generalization_lr_decay}.

\section{Conclusion}
\label{sec:conclusion}

In this paper, we introduced a novel method to recover the plasticity of neural networks.
The proposed method, Soft Weight Rescaling, scales down the weights in proportion to the rate of weight growth.
This approach prevents unbounded weight growth, a key factor behind various issues in deep learning.
Through a series of experiments on standard image classification benchmarks, including warm-start and continual learning settings, SWR consistently outperformed existing weight regularization and re-initialization methods.

Our study primarily focused on scaling down parameters.
However, scaling up the weights depending on the learning progress could also prove beneficial.
Investigating active scaling methods could potentially address the issues associated with the extensive training time in large neural networks. 
Although SWR achieved impressive results in several experiments, L2 often demonstrated better performance.
This suggests the potential existence of even more effective weight rescaling methods. 
Additionally, there are further opportunities for exploration, such as regularizing models like transformers using proportionality or investigating alternative approaches to estimating the weight growth rate. 
A promising approach involves analyzing initialization techniques that effectively address these challenges.
This analysis could yield insights into the characteristics of model parameters, potentially leading to improved initialization or optimization methods.



\bibliography{iclr2025_conference}

\begin{thebibliography}{49}
\providecommand{\natexlab}[1]{#1}
\providecommand{\url}[1]{\texttt{#1}}
\expandafter\ifx\csname urlstyle\endcsname\relax
  \providecommand{\doi}[1]{doi: #1}\else
  \providecommand{\doi}{doi: \begingroup \urlstyle{rm}\Url}\fi

\bibitem[Alabdulmohsin et~al.(2021)Alabdulmohsin, Maennel, and Keysers]{alabdulmohsin2021impact}
Ibrahim Alabdulmohsin, Hartmut Maennel, and Daniel Keysers.
\newblock The impact of reinitialization on generalization in convolutional neural networks.
\newblock \emph{arXiv preprint arXiv:2109.00267}, 2021.

\bibitem[Andriushchenko et~al.(2023)Andriushchenko, D'Angelo, Varre, and Flammarion]{andriushchenko2023we}
Maksym Andriushchenko, Francesco D'Angelo, Aditya Varre, and Nicolas Flammarion.
\newblock Why do we need weight decay in modern deep learning?
\newblock \emph{arXiv preprint arXiv:2310.04415}, 2023.

\bibitem[Ash \& Adams(2020)Ash and Adams]{ash2020warm}
Jordan Ash and Ryan~P Adams.
\newblock On warm-starting neural network training.
\newblock \emph{Advances in neural information processing systems}, 33:\penalty0 3884--3894, 2020.

\bibitem[Ba(2016)]{ba2016layer}
JL~Ba.
\newblock Layer normalization.
\newblock \emph{arXiv preprint arXiv:1607.06450}, 2016.

\bibitem[Bartlett(1996)]{bartlett1996valid}
Peter Bartlett.
\newblock For valid generalization the size of the weights is more important than the size of the network.
\newblock \emph{Advances in neural information processing systems}, 9, 1996.

\bibitem[Berariu et~al.(2021)Berariu, Czarnecki, De, Bornschein, Smith, Pascanu, and Clopath]{berariu2021study}
Tudor Berariu, Wojciech Czarnecki, Soham De, Jorg Bornschein, Samuel Smith, Razvan Pascanu, and Claudia Clopath.
\newblock A study on the plasticity of neural networks.
\newblock \emph{arXiv preprint arXiv:2106.00042}, 2021.

\bibitem[Couellan(2021)]{couellan2021coupling}
Nicolas Couellan.
\newblock The coupling effect of lipschitz regularization in neural networks.
\newblock \emph{SN Computer Science}, 2\penalty0 (2):\penalty0 113, 2021.

\bibitem[Deng(2012)]{deng2012mnist}
Li~Deng.
\newblock The mnist database of handwritten digit images for machine learning research [best of the web].
\newblock \emph{IEEE signal processing magazine}, 29\penalty0 (6):\penalty0 141--142, 2012.

\bibitem[Du et~al.(2018)Du, Hu, and Lee]{du2018algorithmic}
Simon~S Du, Wei Hu, and Jason~D Lee.
\newblock Algorithmic regularization in learning deep homogeneous models: Layers are automatically balanced.
\newblock \emph{Advances in neural information processing systems}, 31, 2018.

\bibitem[Elsayed et~al.(2024)Elsayed, Lan, Lyle, and Mahmood]{elsayed2024weight}
Mohamed Elsayed, Qingfeng Lan, Clare Lyle, and A~Rupam Mahmood.
\newblock Weight clipping for deep continual and reinforcement learning.
\newblock \emph{arXiv preprint arXiv:2407.01704}, 2024.

\bibitem[Ghiasi et~al.(2024)Ghiasi, Shafahi, and Ardekani]{ghiasi2024improving}
Mohammad~Amin Ghiasi, Ali Shafahi, and Reza Ardekani.
\newblock Improving robustness with adaptive weight decay.
\newblock \emph{Advances in Neural Information Processing Systems}, 36, 2024.

\bibitem[Gogianu et~al.(2021)Gogianu, Berariu, Rosca, Clopath, Busoniu, and Pascanu]{gogianu2021spectral}
Florin Gogianu, Tudor Berariu, Mihaela~C Rosca, Claudia Clopath, Lucian Busoniu, and Razvan Pascanu.
\newblock Spectral normalisation for deep reinforcement learning: an optimisation perspective.
\newblock In \emph{International Conference on Machine Learning}, pp.\  3734--3744. PMLR, 2021.

\bibitem[Golowich et~al.(2018)Golowich, Rakhlin, and Shamir]{golowich2018size}
Noah Golowich, Alexander Rakhlin, and Ohad Shamir.
\newblock Size-independent sample complexity of neural networks.
\newblock In \emph{Conference On Learning Theory}, pp.\  297--299. PMLR, 2018.

\bibitem[Gouk et~al.(2021)Gouk, Frank, Pfahringer, and Cree]{gouk2021regularisation}
Henry Gouk, Eibe Frank, Bernhard Pfahringer, and Michael~J Cree.
\newblock Regularisation of neural networks by enforcing lipschitz continuity.
\newblock \emph{Machine Learning}, 110:\penalty0 393--416, 2021.

\bibitem[Han et~al.(2016)Han, Pool, Narang, Mao, Gong, Tang, Elsen, Vajda, Paluri, Tran, et~al.]{han2016dsd}
Song Han, Jeff Pool, Sharan Narang, Huizi Mao, Enhao Gong, Shijian Tang, Erich Elsen, Peter Vajda, Manohar Paluri, John Tran, et~al.
\newblock Dsd: Dense-sparse-dense training for deep neural networks.
\newblock \emph{arXiv preprint arXiv:1607.04381}, 2016.

\bibitem[Huang et~al.(2017)Huang, Liu, Lang, and Li]{huang2017projection}
Lei Huang, Xianglong Liu, Bo~Lang, and Bo~Li.
\newblock Projection based weight normalization for deep neural networks.
\newblock \emph{arXiv preprint arXiv:1710.02338}, 2017.

\bibitem[Ioffe(2015)]{ioffe2015batch}
Sergey Ioffe.
\newblock Batch normalization: Accelerating deep network training by reducing internal covariate shift.
\newblock \emph{arXiv preprint arXiv:1502.03167}, 2015.

\bibitem[Ishii \& Sato(2018)Ishii and Sato]{ishii2018layer}
Masato Ishii and Atsushi Sato.
\newblock Layer-wise weight decay for deep neural networks.
\newblock In \emph{Image and Video Technology: 8th Pacific-Rim Symposium, PSIVT 2017, Wuhan, China, November 20-24, 2017, Revised Selected Papers 8}, pp.\  276--289. Springer, 2018.

\bibitem[Krizhevsky et~al.(2009)Krizhevsky, Hinton, et~al.]{krizhevsky2009learning}
Alex Krizhevsky, Geoffrey Hinton, et~al.
\newblock Learning multiple layers of features from tiny images.
\newblock 2009.

\bibitem[Krogh \& Hertz(1991)Krogh and Hertz]{krogh1991simple}
Anders Krogh and John Hertz.
\newblock A simple weight decay can improve generalization.
\newblock \emph{Advances in neural information processing systems}, 4, 1991.

\bibitem[Kumar et~al.(2020)Kumar, Agarwal, Ghosh, and Levine]{kumar2020implicit}
Aviral Kumar, Rishabh Agarwal, Dibya Ghosh, and Sergey Levine.
\newblock Implicit under-parameterization inhibits data-efficient deep reinforcement learning.
\newblock \emph{arXiv preprint arXiv:2010.14498}, 2020.

\bibitem[Kumar et~al.(2023)Kumar, Marklund, and Van~Roy]{kumar2023maintaining}
Saurabh Kumar, Henrik Marklund, and Benjamin Van~Roy.
\newblock Maintaining plasticity via regenerative regularization.
\newblock \emph{arXiv preprint arXiv:2308.11958}, 2023.

\bibitem[Le \& Yang(2015)Le and Yang]{le2015tiny}
Ya~Le and Xuan Yang.
\newblock Tiny imagenet visual recognition challenge.
\newblock \emph{CS 231N}, 7\penalty0 (7):\penalty0 3, 2015.

\bibitem[Lee et~al.(2024)Lee, Cho, Kim, Kim, Min, Choo, and Lyle]{lee2024slow}
Hojoon Lee, Hyeonseo Cho, Hyunseung Kim, Donghu Kim, Dugki Min, Jaegul Choo, and Clare Lyle.
\newblock Slow and steady wins the race: Maintaining plasticity with hare and tortoise networks.
\newblock \emph{arXiv preprint arXiv:2406.02596}, 2024.

\bibitem[Lewandowski et~al.(2023)Lewandowski, Tanaka, Schuurmans, and Machado]{lewandowski2023curvature}
Alex Lewandowski, Haruto Tanaka, Dale Schuurmans, and Marlos~C Machado.
\newblock Curvature explains loss of plasticity.
\newblock 2023.

\bibitem[Li et~al.(2020{\natexlab{a}})Li, Chen, and Yang]{li2020understanding}
Xiang Li, Shuo Chen, and Jian Yang.
\newblock Understanding the disharmony between weight normalization family and weight decay.
\newblock In \emph{Proceedings of the AAAI Conference on Artificial Intelligence}, volume~34, pp.\  4715--4722, 2020{\natexlab{a}}.

\bibitem[Li et~al.(2020{\natexlab{b}})Li, Xiong, An, Xu, and Dou]{li2020rifle}
Xingjian Li, Haoyi Xiong, Haozhe An, Cheng-Zhong Xu, and Dejing Dou.
\newblock Rifle: Backpropagation in depth for deep transfer learning through re-initializing the fully-connected layer.
\newblock In \emph{International Conference on Machine Learning}, pp.\  6010--6019. PMLR, 2020{\natexlab{b}}.

\bibitem[Liu et~al.(2021)Liu, Yufei, and Chan]{liu2021improve}
Ziquan Liu, CUI Yufei, and Antoni~B Chan.
\newblock Improve generalization and robustness of neural networks via weight scale shifting invariant regularizations.
\newblock In \emph{ICML 2021 Workshop on Adversarial Machine Learning}, 2021.

\bibitem[Lyle et~al.(2024{\natexlab{a}})Lyle, Zheng, Khetarpal, Martens, van Hasselt, Pascanu, and Dabney]{lyle2024normalization}
Clare Lyle, Zeyu Zheng, Khimya Khetarpal, James Martens, Hado van Hasselt, Razvan Pascanu, and Will Dabney.
\newblock Normalization and effective learning rates in reinforcement learning.
\newblock \emph{arXiv preprint arXiv:2407.01800}, 2024{\natexlab{a}}.

\bibitem[Lyle et~al.(2024{\natexlab{b}})Lyle, Zheng, Khetarpal, van Hasselt, Pascanu, Martens, and Dabney]{lyle2024disentangling}
Clare Lyle, Zeyu Zheng, Khimya Khetarpal, Hado van Hasselt, Razvan Pascanu, James Martens, and Will Dabney.
\newblock Disentangling the causes of plasticity loss in neural networks.
\newblock \emph{arXiv preprint arXiv:2402.18762}, 2024{\natexlab{b}}.

\bibitem[Merrill et~al.(2020)Merrill, Ramanujan, Goldberg, Schwartz, and Smith]{merrill2020effects}
William Merrill, Vivek Ramanujan, Yoav Goldberg, Roy Schwartz, and Noah Smith.
\newblock Effects of parameter norm growth during transformer training: Inductive bias from gradient descent.
\newblock \emph{arXiv preprint arXiv:2010.09697}, 2020.

\bibitem[Neyshabur et~al.(2015{\natexlab{a}})Neyshabur, Salakhutdinov, and Srebro]{neyshabur2015path}
Behnam Neyshabur, Russ~R Salakhutdinov, and Nati Srebro.
\newblock Path-sgd: Path-normalized optimization in deep neural networks.
\newblock \emph{Advances in neural information processing systems}, 28, 2015{\natexlab{a}}.

\bibitem[Neyshabur et~al.(2015{\natexlab{b}})Neyshabur, Tomioka, and Srebro]{neyshabur2015norm}
Behnam Neyshabur, Ryota Tomioka, and Nathan Srebro.
\newblock Norm-based capacity control in neural networks.
\newblock In \emph{Conference on learning theory}, pp.\  1376--1401. PMLR, 2015{\natexlab{b}}.

\bibitem[Neyshabur et~al.(2017)Neyshabur, Bhojanapalli, McAllester, and Srebro]{neyshabur2017exploring}
Behnam Neyshabur, Srinadh Bhojanapalli, David McAllester, and Nati Srebro.
\newblock Exploring generalization in deep learning.
\newblock \emph{Advances in neural information processing systems}, 30, 2017.

\bibitem[Niehaus et~al.(2024)Niehaus, Krumnack, and Heidemann]{niehaus2024weight}
Lukas Niehaus, Ulf Krumnack, and Gunther Heidemann.
\newblock Weight rescaling: Applying initialization strategies during training.
\newblock \emph{Swedish Artificial Intelligence Society}, pp.\  83--92, 2024.

\bibitem[Nikishin et~al.(2022)Nikishin, Schwarzer, D’Oro, Bacon, and Courville]{nikishin2022primacy}
Evgenii Nikishin, Max Schwarzer, Pierluca D’Oro, Pierre-Luc Bacon, and Aaron Courville.
\newblock The primacy bias in deep reinforcement learning.
\newblock In \emph{International conference on machine learning}, pp.\  16828--16847. PMLR, 2022.

\bibitem[Ramkumar et~al.(2023)Ramkumar, Arani, and Zonooz]{ramkumar2023learn}
Vijaya Raghavan~T Ramkumar, Elahe Arani, and Bahram Zonooz.
\newblock Learn, unlearn and relearn: An online learning paradigm for deep neural networks.
\newblock \emph{arXiv preprint arXiv:2303.10455}, 2023.

\bibitem[Saul(2023)]{saul2023weightbalancing}
Lawrence~K. Saul.
\newblock Weight-balancing fixes and flows for deep learning.
\newblock \emph{Transactions on Machine Learning Research}, 2023.
\newblock ISSN 2835-8856.
\newblock URL \url{https://openreview.net/forum?id=uaHyXxyp2r}.

\bibitem[Shen et~al.(2024)Shen, Yin, Molchanov, Mao, and Alvarez]{shen2024step}
Maying Shen, Hongxu Yin, Pavlo Molchanov, Lei Mao, and Jose~M Alvarez.
\newblock Step out and seek around: On warm-start training with incremental data.
\newblock \emph{arXiv preprint arXiv:2406.04484}, 2024.

\bibitem[Shin et~al.(2024)Shin, Oh, Cho, and Yun]{shin2024dash}
Baekrok Shin, Junsoo Oh, Hanseul Cho, and Chulhee Yun.
\newblock Dash: Warm-starting neural network training without loss of plasticity under stationarity.
\newblock In \emph{2nd Workshop on Advancing Neural Network Training: Computational Efficiency, Scalability, and Resource Optimization (WANT@ ICML 2024)}, 2024.

\bibitem[Simonyan \& Zisserman(2014)Simonyan and Zisserman]{simonyan2014very}
Karen Simonyan and Andrew Zisserman.
\newblock Very deep convolutional networks for large-scale image recognition.
\newblock \emph{arXiv preprint arXiv:1409.1556}, 2014.

\bibitem[Sokar et~al.(2023)Sokar, Agarwal, Castro, and Evci]{sokar2023dormant}
Ghada Sokar, Rishabh Agarwal, Pablo~Samuel Castro, and Utku Evci.
\newblock The dormant neuron phenomenon in deep reinforcement learning.
\newblock In \emph{International Conference on Machine Learning}, pp.\  32145--32168. PMLR, 2023.

\bibitem[Summers \& Dinneen(2019)Summers and Dinneen]{summers2019four}
Cecilia Summers and Michael~J Dinneen.
\newblock Four things everyone should know to improve batch normalization.
\newblock \emph{arXiv preprint arXiv:1906.03548}, 2019.

\bibitem[Taha et~al.(2021)Taha, Shrivastava, and Davis]{taha2021knowledge}
Ahmed Taha, Abhinav Shrivastava, and Larry~S Davis.
\newblock Knowledge evolution in neural networks.
\newblock In \emph{Proceedings of the IEEE/CVF Conference on Computer Vision and Pattern Recognition}, pp.\  12843--12852, 2021.

\bibitem[Van~Laarhoven(2017)]{van2017l2}
Twan Van~Laarhoven.
\newblock L2 regularization versus batch and weight normalization.
\newblock \emph{arXiv preprint arXiv:1706.05350}, 2017.

\bibitem[Yoshida \& Miyato(2017)Yoshida and Miyato]{yoshida2017spectral}
Yuichi Yoshida and Takeru Miyato.
\newblock Spectral norm regularization for improving the generalizability of deep learning.
\newblock \emph{arXiv preprint arXiv:1705.10941}, 2017.

\bibitem[Zaidi et~al.(2023)Zaidi, Berariu, Kim, Bornschein, Clopath, Teh, and Pascanu]{zaidi2023does}
Sheheryar Zaidi, Tudor Berariu, Hyunjik Kim, Jorg Bornschein, Claudia Clopath, Yee~Whye Teh, and Razvan Pascanu.
\newblock When does re-initialization work?
\newblock In \emph{Proceedings on}, pp.\  12--26. PMLR, 2023.

\bibitem[Zhang et~al.(2021)Zhang, Bengio, Hardt, Recht, and Vinyals]{zhang2021understanding}
Chiyuan Zhang, Samy Bengio, Moritz Hardt, Benjamin Recht, and Oriol Vinyals.
\newblock Understanding deep learning (still) requires rethinking generalization.
\newblock \emph{Communications of the ACM}, 64\penalty0 (3):\penalty0 107--115, 2021.

\bibitem[Zhang et~al.(2018)Zhang, Wang, Xu, and Grosse]{zhang2018three}
Guodong Zhang, Chaoqi Wang, Bowen Xu, and Roger Grosse.
\newblock Three mechanisms of weight decay regularization.
\newblock \emph{arXiv preprint arXiv:1810.12281}, 2018.

\end{thebibliography}
\bibliographystyle{iclr2025_conference}

\newpage
\appendix
\section{Proof of Theorem \ref{thm:ScaledOutputNetwork}}\label{app:proof_thm1}

\begin{proof}
    Consider a set $c = \{c_1, c_2, \dots, c_L\}$ consisting of positive real numbers such that $C=\Pi_{i=1}^L c_i$. Then, construct the new parameter set $\theta^c \doteq \{W_1^c, b_1^c, \dots W_L^c, b_L^c\}$ according to the following rules:
    \[W_l^c \leftarrow c_l \cdot W_l, \quad b_l^c \leftarrow \left(\prod_{i=1}^l c_i \right) \cdot b_l\]

    Let $a_l^c$ and $z_l^c$ denote the output after passing through the $l$-th activation function and layer, respectively. Since the homogeneous activation function $\phi$ satisfies $c\phi(x) = \phi(cx)$ for any $c \geq 0$, output of the constructed network $z^c = f_{\theta^c}(x)$ is,
    
    \begin{align*}
        f_{\theta^c}(x)= z^c &= W_L^c a_{L-1}^c + b_L^c\\
        &= c_L \left( W_L \phi(z_{L-1}^c) + \prod_{i=1}^{L-1}c_i b_L\right) \\
        &=c_L \left( W_L \phi\left(W_{L-1}^c a_{L-2}^c + b_{L-1}^c\right) + \prod_{i=1}^{L-1}c_i b_L\right)\\
        &= c_L \left( W_L \phi\left(c_{L-1} \left( W_{L-1} \phi(z_{L-2}^c) + \prod_{i=1}^{L-2}c_i b_{L-1}\right)\right) + \prod_{i=1}^{L-1}c_i b_L\right) \\
        &= c_L c_{L-1} \left( W_L \phi \left( W_{L-1} \phi(z_{L-2}^c) + \prod_{i=1}^{L-2}c_i b_{L-1}\right) + \prod_{i=1}^{L-2}c_i b_L\right) \\
        &= \dots \\
        &= c_L c_{L-1}\dots c_1 \cdot f_\theta(x) \\
        &= C \cdot f_\theta(x)
    \end{align*}

    Therefore, we can construct proportional networks with proportionality constant $C$ using infinitely many set $c$.
\end{proof}

\newtheorem{corollary}{Corollary}[theorem]

\section{Boundedness}
\label{app:boundedness}
In this section, we present the proof for the weight magnitude boundedness of SWR. If the Frobenius norm of the weight of an arbitrary layer is bounded by a constant, the entire network is also bounded. Therefore, we focus on demonstrating the boundedness of a single layer.

\begin{theorem}  \label{Thm:boundedness}
If the change of squared Frobenius norm of the weight matrix, resulting from the single gradient update, is bounded by a constant for all weight matrices in the neural network, then SWR for every update step with fixed coefficient $\lambda$ bounds the Frobenius norm of the weight matrix. 

\end{theorem}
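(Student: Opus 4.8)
The plan is to track how the Frobenius norm of a \emph{single} weight matrix evolves over the update steps and to show it satisfies a contractive linear recursion; as the appendix already notes, boundedness of each layer's norm implies boundedness for the whole network, so it suffices to treat one layer. First I would fix a layer, write $a \doteq \|W^{\mathrm{init}}\|$ for its constant initial norm, and let $n_t \doteq \|W_t\|$ denote the norm of that matrix \emph{after} the SWR rescaling at step $t$. Each update step then decomposes into two operations. A gradient update sends $W_t$ to some $\hat{W}_{t+1}$, and by hypothesis the change of the squared norm is bounded, i.e.\ $\|\hat{W}_{t+1}\|^2 \le n_t^2 + D$ for a fixed constant $D \ge 0$. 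SWR then multiplies the result by $c = (\lambda a + (1-\lambda)\|\hat{W}_{t+1}\|)/\|\hat{W}_{t+1}\|$, so the post-rescaling norm is exactly $n_{t+1} = \lambda a + (1-\lambda)\|\hat{W}_{t+1}\|$, since scaling a matrix by $c$ scales its Frobenius norm by $c$.

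The key step will be to convert the squared-norm bound into an additive bound on the norm itself. Using subadditivity of the square root, $\|\hat{W}_{t+1}\| \le \sqrt{n_t^2 + D} \le n_t + \sqrt{D}$, and substituting into the SWR formula gives the linear recursion
\[
n_{t+1} \le \lambda a + (1-\lambda)\bigl(n_t + \sqrt{D}\bigr) = \underbrace{\lambda a + (1-\lambda)\sqrt{D}}_{\alpha} + (1-\lambda)\, n_t .
\]
Because the coefficient is fixed with $\lambda \in (0,1]$, the contraction factor $\beta \doteq 1-\lambda$ lies in $[0,1)$, which is exactly what keeps the norm from escaping even though the gradient may inflate it at every step.

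To finish I would analyze $n_{t+1} \le \alpha + \beta n_t$ with $\beta \in [0,1)$. Its fixed point is $n^\star = \alpha/(1-\beta) = a + (1-\lambda)\sqrt{D}/\lambda$, and a one-line induction shows $n_t \le \max(n_0, n^\star)$ for all $t$: if $n_t \le n^\star$ then $n_{t+1} \le \alpha + \beta n^\star = n^\star$, and otherwise the excess contracts geometrically via $n_{t+1} - n^\star \le \beta(n_t - n^\star)$. Hence the Frobenius norm of the matrix stays bounded, and since the same argument applies verbatim to each layer with its own constants $a$ and $D$, the entire network has bounded weights.

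The hard part — though a mild one — is the bookkeeping that turns the assumed bound on the \emph{change of the squared} norm into a usable recursion for the norm; the subadditivity estimate $\sqrt{n_t^2 + D}\le n_t + \sqrt{D}$ is the crucial trick that linearizes the update and exposes the contraction factor $1-\lambda$ (working with squared norms directly instead would make the recursion quadratic and awkward). A secondary point I would state carefully is the order of operations (gradient update followed by rescaling) and the requirement $\lambda > 0$, since $\lambda = 0$ removes the contraction and the conclusion would no longer hold.
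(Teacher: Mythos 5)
Your proof is correct, and its skeleton matches the paper's: both arguments rest on (i) the observation that the post-rescaling norm is the convex combination $\lambda\|W_0\|+(1-\lambda)\|W\|$, and (ii) converting the hypothesis on the \emph{squared} norm into a per-step increment bound of $\sqrt{B}$ on the norm itself --- you do this via the subadditivity estimate $\sqrt{n_t^2+D}\leq n_t+\sqrt{D}$, the paper via the factorization $\bigl|\,\|W_t\|^2-\|W_{t-1}^c\|^2\bigr| = \bigl|\,\|W_t\|-\|W_{t-1}^c\|\bigr|\cdot\bigl|\,\|W_t\|+\|W_{t-1}^c\|\bigr| \geq \bigl|\,\|W_t\|-\|W_{t-1}^c\|\bigr|^2$; these are interchangeable. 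Where you genuinely diverge is the concluding step. The paper argues by a threshold/drift claim: once $\|W_t\|\geq B' \doteq \|W_0\|+\sqrt{B}/\lambda$, the scaling reduction $\lambda(\|W_t\|-\|W_0\|)$ dominates the $\sqrt{B}$ growth, so the norm ``no longer increases,'' and it then asserts $\|W_t\|\leq B'$. That assertion is loose as stated: it ignores both the case where the norm starts above $B'$ and the overshoot a single gradient step can produce before rescaling pulls it back. Your route --- packaging the two operations into the linear recursion $n_{t+1}\leq \lambda a+(1-\lambda)\sqrt{D}+(1-\lambda)n_t$ and solving it by fixed-point induction --- yields $n_t\leq\max(n_0,n^\star)$ with $n^\star=a+(1-\lambda)\sqrt{D}/\lambda$, which handles the initial condition explicitly, gives a slightly tighter constant than $B'$, and makes the role of $\lambda>0$ (the contraction factor $1-\lambda<1$) transparent. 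The only loose end on your side is that your $n_t$ is the post-rescaling norm; to bound the weights at \emph{all} times, add the one-line remark that the pre-rescaling norm satisfies $\|\hat{W}_{t+1}\|\leq n_t+\sqrt{D}\leq\max(n_0,n^\star)+\sqrt{D}$, hence is bounded as well.
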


\begin{proof}
It is enough to show the case where the gradient update increases the magnitude of the weight matrix. For a weight matrix in step $t\geq1$, $W_t$, let the matrix after applying SWR with $\lambda$ once be $W_t^c$, $W_{t-1}^c$ be the weight matrix before the gradient update at $W_t$, and $B>0$ be the bound of the change of squared Frobenius norm of the matrix.
$W_t^c$ can be written as below:
\begin{align}
W_t^c 
&= \frac{\lambda \times \|W_0\| + (1 - \lambda) \times \|W_t\|}{\|W_t\|} W_t \\ 
&=\left( \lambda \frac{\|W_0\|}{\|W_t\|} + (1-\lambda)\right)W_t
\end{align}

The reduction of the Frobenius norm by scaling can be simply represented as:
\begin{align}
\| W_t \| - \| W_t^c \|
&= \| W_t \| - \left( \lambda \frac{\| W_0 \|}{\| W_t \|} + (1 - \lambda)\right)\| W_t \| \\
&= \| W_t \| - \left( \lambda\| W_0 \| + (1 - \lambda) \| W_t \| \right) \\ 
&= \lambda(\| W_t \| - \| W_0 \|) \label{eq:norm_difference}
\end{align}

From the assumption, the increase of the Frobenius norm by gradient update is bounded.
\begin{align}
B 
&\geq \left |\|W_t\|^2 - \|W_{t-1}^c\|^2 \right |\\
&= \left |\|W_t\| - \|W_{t-1}^c\| \right \| \times \left |\|W_t\| + \|W_{t-1}^c\| \right |\\ 
&\geq \left |\|W_t\| - \|W_{t-1}^c\| \right |^2  \\ 
& \implies \left |\|W_t\| - \|W_{t-1}^c\| \right | \leq \sqrt{B}
\end{align}
From the perspective of the Frobenius norm, the weight magnitude stops growing when the reduction with scaling gets greater than the increase with gradient update. The condition can be written by below inequality:

\begin{align}
\lambda(\|W_t\| - \|W_0\|) &\geq \sqrt{B} \\ \|W_t\| 
&\geq \frac{\sqrt{B}}{\lambda} + \|W_0\| \doteq B'
\end{align}

For all $t\geq1$, if the Frobenius norm exceeds $B'$, it will no longer increase. Since $B'$ is constant, we can bound the Frobenius norm as follows:

\begin{align}
\| W_t \| \leq B'
\end{align}

\end{proof}

By following the assumptions of Theorem \ref{Thm:boundedness}, it can be easily shown that the weight Frobenius norm growth follows $O(\sqrt{t})$ as the empirical evidence shown in \citep{merrill2020effects}, thereby indicating that the assumption is not unreasonable.

Since the spectral norm of the weight matrix is lower than its Frobenius norm, we can show that the neural network using SWR has an upper bound of the Lipschitz constant. For simplexity, we only consider MLP with a 1-Lipschitz activation function.

\begin{corollary} \label{cor:lipschitz}
    For an MLP, $f_\theta$, with $1$-Lipshcitz activation function (e.g. ReLU, Leaky ReLU, etc.),  $f_\theta$ is Lipschitz continuous with applying SWR for every update step.

\end{corollary}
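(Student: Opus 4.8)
The plan is to reduce the Lipschitz continuity of the whole network to the boundedness result already established in Theorem \ref{Thm:boundedness}. First I would write the MLP explicitly as a composition of affine maps and activations, $f_\theta = T_L \circ \phi \circ T_{L-1} \circ \cdots \circ \phi \circ T_1$, where each layer map is $T_l(u) = W_l u + b_l$. The elementary observation is that an affine map $T_l$ is Lipschitz, with respect to the Euclidean norm on its input, with constant equal to the spectral norm $\|W_l\|_2$, and that the additive bias $b_l$ plays no role in this constant, since $\|T_l(u)-T_l(v)\| = \|W_l(u-v)\| \le \|W_l\|_2\,\|u-v\|$.

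Next I would invoke the standard fact that the Lipschitz constant of a composition is at most the product of the individual Lipschitz constants. Combining this with the hypothesis that $\phi$ is $1$-Lipschitz yields
\[
\mathrm{Lip}(f_\theta) \le \prod_{l=1}^L \|W_l\|_2 .
\]
The remaining step is to control each spectral norm by a quantity we already know how to bound: using $\|W_l\|_2 \le \|W_l\|$ (the spectral norm never exceeds the Frobenius norm) converts the product into $\prod_{l=1}^L \|W_l\|$.

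Finally, Theorem \ref{Thm:boundedness} supplies, for each layer, a constant $B_l'$ with $\|W_l\| \le B_l'$ at every update step when SWR is applied with a fixed coefficient $\lambda$. Substituting gives $\mathrm{Lip}(f_\theta) \le \prod_{l=1}^L B_l'$, a finite constant independent of the training step $t$, which is precisely the claim that $f_\theta$ stays Lipschitz continuous throughout training.

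The argument is essentially an assembly of standard pieces, so I do not expect a genuine obstacle; the one point requiring care is \emph{uniformity}. I must ensure that the constants $B_l'$ from Theorem \ref{Thm:boundedness} hold simultaneously for all $t \ge 1$ and for every layer, so that the product bound is a single step-independent constant rather than one that degrades as training proceeds — this is guaranteed because Theorem \ref{Thm:boundedness} fixes $\lambda$ and bounds each layer's Frobenius norm for all $t \ge 1$. A secondary detail is to confirm that the spectral-to-Frobenius inequality and the sub-multiplicativity of Lipschitz constants are applied with respect to a consistent norm, namely the Euclidean norm on each layer's input space, which they are.
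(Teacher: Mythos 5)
Your proposal is correct and follows essentially the same route as the paper's proof: bound the network's Lipschitz constant by the product of the layers' spectral norms (the $1$-Lipschitz activations contributing nothing), dominate each spectral norm by the Frobenius norm, and invoke the per-layer Frobenius bounds from Theorem~\ref{Thm:boundedness} to get a single step-independent constant. Your added emphasis on the bias terms being irrelevant and on the uniformity of the bounds over all update steps is a welcome clarification but not a different argument.
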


\begin{proof}
    We denote the spectral norm of the matrix with $\|\cdot\|_\sigma$. Let weight matrices of $f_\theta$ be $W^l$ ($l \in \{1,2,\dots L\}$), and $B^l$ be the upper bound of the Frobenius norm of each of them. Using the relationship between the Frobenius norm and the spectral norm, $\|W^l\|_\sigma \leq\|W^l\|$ for all $l$. Since the Lipschitz constant of the weight matrix is same with its spectral norm and composition of $l_1$ and $l_2$ Lipschitz function is $l_1l_2$ Lipschitz function (\citet{gouk2021regularisation}), the Lipschitz constant of neural network $k_\theta$ can be express as:
    \begin{align}
        k_\theta &\leq \prod_{l}\|W^l\|_\sigma \\
          &\leq \prod_{l}\|W^l\| \\
          &\leq \prod_{l} B^l \doteq B'
    \end{align}
    Note that the Lipschitz constant of the activation function is 1, so activation functions do not affect to bound of the Lipschitz constant of $k_\theta$. Since Lipschitz constant $k_\theta$ is bounded with $B'$, $f_\theta$ is $B'$-Lipschitz continuous function.
\end{proof}

Similarly, we can get the neural network that is trained with SWR as Lipschitz continuous when using a convolution network or normalization layer. We left a tight upper bound of Lipschitz constant for future work.

\section{Balancedness}
\label{app:balancedness}
\theoremstyle{definition}
\subsection{Empirical study}

\citet{neyshabur2015path} defined the entry-wise $\mathrm{\ell_{p,q}}{\text{-norm}}$ of the model, which is expressed as follows: 
\begin{equation}
    \| W \|_{p,q} = \bigg(\sum_i \Big(\sum_j | W_{ij} |^{p}\Big)^{\frac{q}{p}}\bigg)^{\frac{1}{q}}.
\end{equation}
If two models are functionally identical, the model that has a smaller $\mathrm{\ell_{p,q}}{\text{-norm}}$ represents more balanced. 
In order to estimate the model balancedness, we used the ratio between the entry-wise $\mathrm{\ell_{p,q}}{\text{-norm}}$ of current and global minimal. 
We compute the global minimal $\mathrm{\ell_{p,q}}{\text{-norm}}$  using Algorithm 1 of \citet{saul2023weightbalancing}.
Fig. \ref{fig:balancedness} shows the balancedness of the 3-layer MLP, measured at the end of each epoch, along with the test accuracy.
SWR is shown to enhance model balancedness and improve test accuracy compared to the vanilla model.

\begin{figure}[!h]
\centering
    \begin{subfigure}[t]{.48\linewidth}
        \centering\captionsetup{width=.95\linewidth}%
        \includegraphics[width=\linewidth]{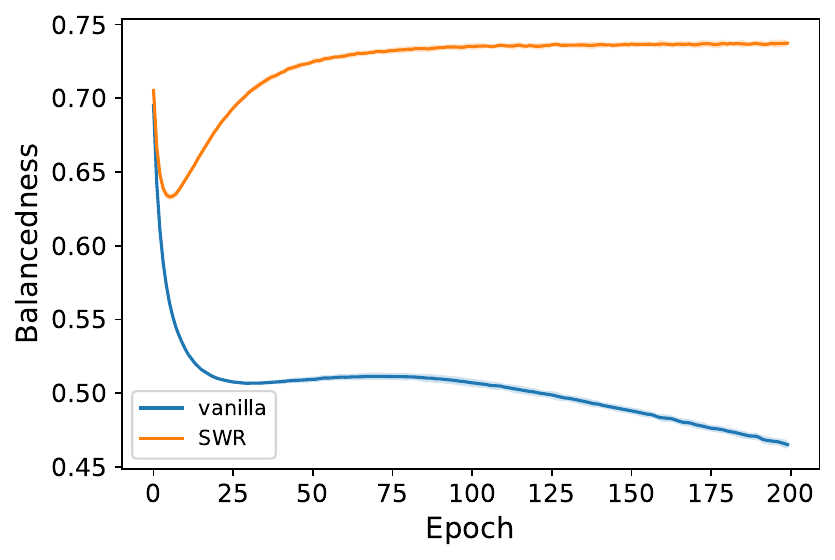}
    \end{subfigure}
    \begin{subfigure}[t]{.48\linewidth}
        \centering\captionsetup{width=.95\linewidth}%
        \includegraphics[width=\linewidth]{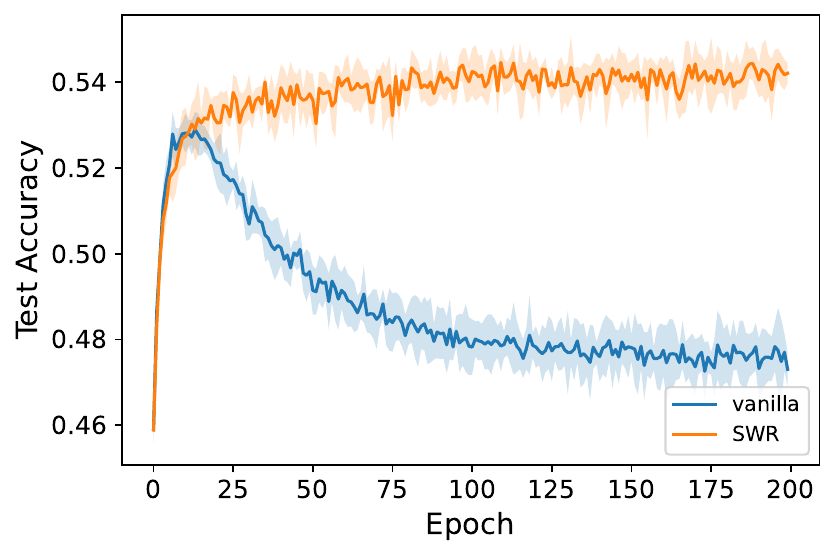}
    \end{subfigure}

    \caption{
    \textbf{Results for the balancedness. }
    The left figure shows the balancedness of the model, and the right figure shows the test accuracy.
    The results are averaged over 5 runs on CIFAR-10 dataset.
    }
    \label{fig:balancedness}
\end{figure}

\subsection{Theoretical Analysis}
Next, we will show that SWR improves the balance between layers. Before proving it, we define how to express balancedness.

\begin{definition} [Balancedness between two layers]\label{def:balancedness}
Consider a network with two weight matrices at time step $t$ to be $W_t$ and $W_t'$ (at initial, $W_0, W_0'$). Without loss of generality, we let $\|W_0\| \leq \|W_0'\|$. We define the balance of two layers $b_t$ as the difference of rates of the Frobenius norms of weight matrices from the initial state. This can be expressed as follows:
\begin{align}
    b_t \doteq | r_t - r_0 | \text{, where } r_t = \frac{\|W_t'\|}{\|W_t\|} 
\end{align}

\end{definition}
That is, $b_t$ is a non-negative value, and the closer it is to 0, the better balance between the two layers.
    
\newcommand\eqmathbox[2][M]{\eqmakebox[#1]{$\displaystyle#2$}}

\begin{theorem}
    Applying SWR with coefficient $\lambda$ enhances the balance of the neural network.
\end{theorem}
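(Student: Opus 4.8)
The plan is to reduce the claim to a single inequality between ratios of Frobenius norms and then recognize that inequality as an instance of the mediant inequality for positive fractions. First I would record the effect of one SWR step on a single layer's norm. From the scaling factor $c_l = \lambda\|W_l^{\text{init}}\|/\|W_l\| + (1-\lambda)$ used throughout, the post-rescaling norm is the exponential moving average
\[
\|W_t^c\| = \lambda \|W_0\| + (1-\lambda)\|W_t\|,
\]
and likewise $\|W_t'^c\| = \lambda \|W_0'\| + (1-\lambda)\|W_t'\|$ for the second layer. This is exactly the identity underlying Equation~\ref{eq:norm_difference}, so no fresh computation is required here.

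Next, abbreviating $a=\|W_0\|$, $a'=\|W_0'\|$, $x=\|W_t\|$, $y=\|W_t'\|$ (all strictly positive), the ratios of Definition~\ref{def:balancedness} become $r_0 = a'/a$ and $r_t = y/x$, while the ratio after applying SWR is
\[
r_t^c = \frac{\|W_t'^c\|}{\|W_t^c\|} = \frac{\lambda a' + (1-\lambda)y}{\lambda a + (1-\lambda)x}.
\]
The key observation is that setting $p=\lambda a'$, $q=\lambda a$, $r=(1-\lambda)y$, $s=(1-\lambda)x$ yields $p/q = r_0$, $r/s = r_t$, and $r_t^c = (p+r)/(q+s)$; that is, $r_t^c$ is precisely the mediant of the fractions $r_0$ and $r_t$.

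I would then invoke the mediant inequality: for positive denominators, $(p+r)/(q+s)$ always lies between $p/q$ and $r/s$, which follows from $\frac{p+r}{q+s} - \frac{p}{q} = \frac{qr-ps}{q(q+s)}$ together with the symmetric identity for the other endpoint, both signs being governed by $qr-ps$. Hence $r_t^c$ lies in the closed interval between $r_0$ and $r_t$, giving
\[
b_t^c = |r_t^c - r_0| \le |r_t - r_0| = b_t,
\]
so a single SWR step never increases the imbalance, and strictly decreases it whenever $r_t \neq r_0$ and $0<\lambda<1$.

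Since the algebra is routine, the main obstacle is purely conceptual: spotting the mediant structure hidden in the EMA update. Beyond that I would only verify the degenerate cases that could break well-posedness — all norms strictly positive so the fractions and the scaling factor are defined, and $\lambda \in (0,1)$ so that both the initialization ratio and the current ratio genuinely enter the mediant. Finally, because Definition~\ref{def:balancedness} concerns a single pair of layers, I would close by noting that the same argument applies to every pair of layers simultaneously, so the network-wide $\ell_{p,q}$ imbalance is driven toward its balanced value, consistent with the empirical findings reported in Figure~\ref{fig:balancedness}.
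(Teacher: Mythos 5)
Your proof is correct and follows essentially the same route as the paper's: both rewrite the post-rescaling ratio $r_t^c$ as the (generalized) mediant of $r_0$ and $r_t$ via the EMA identity for the scaled norms, then conclude $|r_t^c - r_0| \le |r_t - r_0|$ from the fact that a mediant lies between its two constituent fractions. The only difference is cosmetic: you make the mediant substitution explicit and supply the short algebraic justification of the mediant inequality, which the paper simply asserts.
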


\begin{proof}
    Keep the settings from Definition  \ref{def:balancedness}. Let $W_t$ and $W_t'$ be the weight matrices of any two layers at time step $t$ in the neural network and $b_t$ be the balance of $W_t$ and $W_t'$. Then, $b_t^c$, the balance after applying SWR with coefficient $\lambda$, can represent it as below:
    \begin{align}
        b_t^c &\doteq |r_t^c - r_0| \text{, where } r_t^c = \frac{\|W_t'^c\|}{\|W_t^c\|}
    \end{align}
    where $W_t^c$ and $W_t'^c$ are the weight matrices that scaled by SWR with $\lambda$. Then, by equation \ref{eq:norm_difference}, $r_t^c$ can be expanded as follows:
    \begin{align}
        r_t^c = \frac{\lambda\|W_0'\|+(1-\lambda)\|W_t'\|}{\lambda\|W_0\|+(1-\lambda)\|W_t\|}
    \end{align}
    Since $r_t^c$ is the form of generalized mediant of $r_t$ and $r_0$, if $r_0 \leq r_t$, the relationship between their magnitudes and balance satisfies as below:
    \begin{alignat}{2}
        \eqmathbox{r_0} &\leq\quad &\eqmathbox{r_t^c} \quad&\leq \eqmathbox{r_t}  \\
        \Rightarrow \eqmathbox{0} &\leq\quad& \eqmathbox{r_t^c - r_0} \quad&\leq \eqmathbox{r_t - r_0}  \\
        \Rightarrow \eqmathbox{0} &\leq\quad& \eqmathbox{|r_t^c - r_0|} \quad&\leq \eqmathbox{|r_t - r_0|} \label{eq:expand_balance} \\
        \Rightarrow \eqmathbox{0} &\leq\quad& \eqmathbox{b_t^c} \quad&\leq \eqmathbox{b_t} 
    \end{alignat}

    If $r_0 \geq r_t$, we can derive equation \ref{eq:expand_balance}, following a similar approach. Therefore, the balance of arbitrary two layers gets better when applying SWR, which indicates an overall improvement in the balance across all layers of the network.
\end{proof}

\section{Details for Experimental Setup}
\label{app:experimental_setup}
In this section, we will provide details on the experimental setup. First, we specify the hyperparameters that we commonly use. We used 256 for the batch size of the mini-batch and 0.001 for the learning rate. The Adam optimizer was employed, with its hyperparameters set to the default values without any modification. We employed distinct 5 random seeds for all experiments while performing 3 seeds for VGG-16 due to computational efficiency. In the following sections, we present model architectures, the baseline methods that we compared, and the hyperparameters for the best test accuracy.

\subsection{model architectures}
We utilized four model architectures consistently throughout all experiments. The detailed information on architectures is as follows:

\textbf{MLP}: We used the 3-layer Multilayer Perceptron (MLP) with 100 hidden units. The 784 ($28\times28$) input size and 10 output size are fixed since MLP is only trained in the MNIST dataset.

\textbf{CNN}: We employed a Convolutional Neural Network (CNN), which is used in relatively small image classification. The model includes two convolutional layers with a $5\times5$ kernel and 16 channels. The fully connected layers follow with 100 hidden units. 

\textbf{CNN-BN}: In order to verify whether our methodology is effectively applied to normalization layers, we attached batch normalization layers following the convolutional layer in the CNN model.

\textbf{VGG-16} \citep{simonyan2014very}: We adopted VGG-16 to investigate whether SWR adapts properly in large-size models. The number of hidden units of the classifiers was set to 4096 without dropout.

\subsection{Baselines}

\textbf{L2. } The L2 regularization is known as enhancing not only generalization performance \cite{krogh1991simple} but also plasticity \cite{lyle2024disentangling}. We add the loss term $\frac{\lambda}{2}\|\theta\|^2$ on the cross-entropy loss. We sweeped $\lambda$ in \{0.1, 0.01, 0.001, 0.0001, 0.00001\}. 

\textbf{L2 Init. } \citet{kumar2023maintaining} introduced a regularization method to resolve the problem of the loss of plasticity where the input or output of the training data changes periodically. They argued that regularizing toward the initial parameters, results in resetting low utility units and preventing weight rank collapse. We add the loss term $\frac{\lambda}{2}\|\theta-\theta_0\|^2$ on the cross-entropy loss, where $\theta_0$ is the initial learnable parameter. We performed the same grid search with L2.

\textbf{S\&P. } \citet{ash2020warm} demonstrated that the network loses generalization ability for warm start setup, and introduced effective methods that shrink the parameters and add noise perturbation, periodically. In order to reduce the complexity of hyperparameters, we employ a simplified version of S\&P using a single hyperparameter, as shown in \citet{lee2024slow}. We applied S\&P when the training data was updated. The mathematical expression is $\theta \leftarrow (1-\lambda) \theta + \lambda\theta_0$, where $\theta_0$ is initial learnable parameters, and we swept $\lambda$ in \{0.2, 0.4, 0.6, 0.8\}. 

\textbf{head reset. } \citet{nikishin2022primacy} suggested that periodically resetting the final few layers is effective in mitigating plasticity loss. In this paper, we reinitialized the fully connected layers with the same period with S\&P. We only applied reset to the final layer, when MLP is used for training.

\textbf{SWR. } For networks that do not have batch normalization layers, we swept $\lambda$ in \{1, 0.1, 0.01, 0.001, 0.0001\}. Otherwise, we performed a grid search for $\lambda_c$ and $\lambda_f$ in the same range of $\lambda$.

Table. \ref{tab:hyperparameter_ws}-\ref{tab:hyperparameter_gen} shows the best hyperparameter set that we found in various experiments.

\begin{table}[ht]
    
    \centering
    \resizebox{0.5\columnwidth}{!}{%
        \begin{tabular}{l|l|c}
        Dataset & Method & Hyperparameter Set \\
        \hline
                     & S\&P & $\lambda=0.4$ \\
        MNIST        & L2      & $\lambda=1\mathrm{e}{-5}$ \\
        (MLP)        & L2 Init & $\lambda=1\mathrm{e}{-5}$ \\
                      & SWR    & $ \lambda=1\mathrm{e}{-4}$ \\
        \hline
                     & S\&P & $\lambda=0.8$ \\
        CIFAR-10     & L2      & $\lambda=1\mathrm{e}{-2}$ \\
        (CNN)        & L2 Init & $\lambda=1\mathrm{e}{-2}$ \\
                     & SWR     & $ \lambda=1\mathrm{e}{-3}$ \\
        \hline
                     & S\&P & $\lambda=0.8$ \\
        CIFAR-100     & L2      & $\lambda=1\mathrm{e}{-2}$ \\
        (CNN)        & L2 Init & $\lambda=1\mathrm{e}{-2}$ \\
                     & SWR     & $ \lambda=1\mathrm{e}{-3}$ \\
        \hline
                     & S\&P & $\lambda=0.8$ \\
        CIFAR-100     & L2      & $ \lambda=1\mathrm{e}{-2}$ \\
        (CNN-BN)     & L2 Init & $\lambda =1\mathrm{e}{-2}$ \\
                     & SWR     & $\lambda_f=1\mathrm{e}{-4}, \lambda_c=1\mathrm{e}{+0}$ \\
        \hline
                     & S\&P & $\lambda=0.8$ \\
        TinyImageNet & L2      & $\lambda=1\mathrm{e}{-5}$ \\
        (VGG-16)   & L2 Init & $\lambda=1\mathrm{e}{-5}$ \\
                     & SWR     & $\lambda_f=1\mathrm{e}{-2}, \lambda_c=1\mathrm{e}{-1}$ \\
        \end{tabular}%
    }
    \caption{Hyperparameter set of each method on the warm-start experiment.} 
    \label{tab:hyperparameter_ws}
\end{table}

\begin{table}[ht]
    
    \centering
    \resizebox{0.75\columnwidth}{!}{%
        \begin{tabular}{l|l|c|c}
        Dataset & Method & Full Access & Limited Access \\
        \hline
                     & S\&P    & $\lambda=0.6$             & $\lambda=0.2$\\
        MNIST        & L2      & $\lambda=1\mathrm{e}{-4}$ & $\lambda=1\mathrm{e}{-5}$\\
        (MLP)        & L2 Init & $\lambda=1\mathrm{e}{-4}$ & $\lambda=1\mathrm{e}{-5}$\\
                     & SWR     & $\lambda=1\mathrm{e}{-4}$ & $\lambda=1\mathrm{e}{-4}$ \\
        \hline
                     & S\&P    & $\lambda=0.8$             & $\lambda=0.4$ \\
        CIFAR-10     & L2      & $\lambda=1\mathrm{e}{-2}$ & $\lambda=1\mathrm{e}{-2}$\\
        (CNN)        & L2 Init & $\lambda=1\mathrm{e}{-2}$ & $\lambda=1\mathrm{e}{-2}$\\
                     & SWR     & $\lambda=1\mathrm{e}{-3}$ & $\lambda=1\mathrm{e}{-1}$ \\
        \hline
                     & S\&P    & $\lambda=0.8$             & $\lambda=0.6$\\
        CIFAR-100    & L2      & $\lambda=1\mathrm{e}{-2}$ & $\lambda=1\mathrm{e}{-2}$\\
        (CNN)        & L2 Init & $\lambda=1\mathrm{e}{-2}$ & $\lambda=1\mathrm{e}{-2}$\\
                     & SWR     & $\lambda=1\mathrm{e}{-3}$ & $\lambda=1\mathrm{e}{-1}$\\
        \hline
                     & S\&P    & $\lambda=0.8$             & $\lambda=0.4$\\
        CIFAR-100     & L2      & $\lambda=1\mathrm{e}{-2}$ & $\lambda=1\mathrm{e}{-2}$\\
        (CNN-BN)     & L2 Init & $\lambda=1\mathrm{e}{-2}$ & $\lambda=1\mathrm{e}{-2}$\\
                     & SWR     & $\lambda_f=1\mathrm{e}{-4}, \lambda_c=1\mathrm{e}{-1}$ & $\lambda_f=1\mathrm{e}{-1}, \lambda_c=1\mathrm{e}{-2}$\\
        \hline
                     & S\&P    & $\lambda=0.8$             & $\lambda=0.4$\\
        TinyImageNet & L2      & $\lambda=1\mathrm{e}{-4}$ & $\lambda=1\mathrm{e}{-4}$\\
        (VGG-16)     & L2 Init & $\lambda=1\mathrm{e}{-4}$ & $\lambda=1\mathrm{e}{-3}$\\
                     & SWR     & $\lambda_f=1\mathrm{e}{-2}, \lambda_c=1\mathrm{e}{-2}$ & $\lambda_f=1\mathrm{e}{-4}, \lambda_c=1\mathrm{e}{+0}$\\
        \end{tabular}%
    }
    \caption{Hyperparameter set of each method on continual learning.} 
    \label{tab:hyperparameter_cl}
\end{table}

\begin{table}[ht]
    
    \centering
    \resizebox{0.5\columnwidth}{!}{%
        \begin{tabular}{l|l|c}
        Dataset & Method & Hyperparameter Set \\
        \hline
                     
        \multirow{2}{*}{MNIST}     & L2      & $\lambda=1\mathrm{e}{-5}$ \\
        \multirow{2}{*}{(MLP)}     & L2 Init & $\lambda=1\mathrm{e}{-5}$ \\
                                   & SWR     & $\lambda=1\mathrm{e}{-4}$ \\
        \hline
                    
        \multirow{2}{*}{CIFAR-10}  & L2      & $\lambda=1\mathrm{e}{-2}$ \\
        \multirow{2}{*}{(CNN)}     & L2 Init & $\lambda=1\mathrm{e}{-2}$ \\
                                   & SWR     & $\lambda=1\mathrm{e}{-3}$ \\
        \hline
                     
        \multirow{2}{*}{CIFAR-100} & L2      & $\lambda=1\mathrm{e}{-2}$ \\
        \multirow{2}{*}{(CNN)}     & L2 Init & $\lambda=1\mathrm{e}{-2}$ \\
                                   & SWR     & $\lambda=1\mathrm{e}{-3}$ \\
        \hline
                     
        \multirow{2}{*}{CIFAR-100} & L2      & $\lambda=1\mathrm{e}{-2}$ \\
        \multirow{2}{*}{(CNN-BN)}  & L2 Init & $\lambda =1\mathrm{e}{-2}$ \\
                                   & SWR     & $\lambda_f=1\mathrm{e}{-4}, \lambda_c=1\mathrm{e}{-1}$ \\
        \hline
                    
        \multirow{2}{*}{TinyImageNet} & L2      & $\lambda=1\mathrm{e}{-5}$ \\
        \multirow{2}{*}{(VGG-16)}     & L2 Init & $\lambda=1\mathrm{e}{-5}$ \\
                                      & SWR     & $\lambda_f=1\mathrm{e}{-2}, \lambda_c=1\mathrm{e}{-1}$ \\
        \end{tabular}%
    }

    \caption{Hyperparameter set of each method on generalization experiment.} 
    \label{tab:hyperparameter_gen}
\end{table}

\section{Generalization results with learning rate decay}\label{app:generalization_lr_decay}

To assess the performance of SWR under the learning rate scheduler, we conducted learning rate decay in Experiment \ref{exp:generalzation}.
The rest of the configuration was kept unchanged, while the learning rate was multiplied by 1/10 at the start of the 100th and 150th epochs.

There is a consideration to be addressed when applying learning rate decay with SWR. When the learning rate decays, we will show that the regularization strength that maintains balance becomes relatively stronger. Suppose that after time step $t$, the L2 norm of the weight vector is near convergence. To simplify the case, let us assume the weight vector, $w_t$, aligns with the direction of the gradient of the loss $ \nabla_w L(w)$. After the SGD update, the weight vector will be updated as $w_{t+1} = w_t - \alpha \nabla_w L(w)$, meaning the change of L2 norm is $\alpha \| \nabla_w L(w)\|$. 

According to equation \ref{eq:norm_difference}, when applying SWR, the change in L2 norm becomes $\lambda (\|w_{t+1}\| - \|w_0\|)$. Under our assumption, we have $\alpha \| \nabla_w L(w)\|\approx \lambda (\|w_{t+1}\| - \|w_0\|)$. Therefore, when a learning rate decay occurs, this equivalence is broken, causing the weight norm to drop toward the initial weight norm. To address this issue, we used a simple trick that reset the initial weight norm to the current norm when decay happens, as $n^\text{init} \leftarrow \|w_{t}\|$. We refer to this method as SWR + re-init.

The results with learning rate decay can be found in Table \ref{tab:generalization_lr_decay}. SWR+re-init demonstrated performance largely comparable to other methods, specifically leading to an improvement of over 8\% in test accuracy on VGG-16. While SWR + re-init generally outperformed standalone SWR, a slight performance drop was observed in larger models such as VGG-16. This suggests that more effective solutions exist to handle this issue when using learning rate decay. Further research on this matter will be left as future work.

\begin{table}[ht]
    
    \centering
    \resizebox{\columnwidth}{!}{%
        \begin{tabular}{l|ccccc}
          & MNIST           & CIFAR-10    & CIFAR-100   & CIFAR-100  & TinyImageNet  \\Method&(MLP)&(CNN)&(CNN)&(CNN-BN)&(VGG-16) \\ \hline 
        vanilla     & $0.9798 \pm 0.0005$ & $0.6571 \pm 0.0057$ & $0.3490 \pm 0.0021$ & $0.3483 \pm 0.0043$ & $0.4126 \pm 0.0236$ \\
        L2          & ${0.9811 \pm 0.0007}$ & $\mathbf{0.7304 \pm 0.0039}$ & $0.3949 \pm 0.0091$ & $\mathbf{0.4532 \pm 0.0040}$ & $0.4080 \pm 0.0124$ \\
        L2 Init    & ${0.9811 \pm 0.0007}$ & $0.7286 \pm 0.0023$ & $0.4048 \pm 0.0019$ & $0.4341 \pm 0.0019$ & $0.4199 \pm 0.0048$ \\
        SWR (Ours) & $0.9796 \pm 0.0009$ & $0.6925 \pm 0.0078$ & $0.3599 \pm 0.0054$ & $0.4240 \pm 0.0015$ & $\mathbf{0.5221 \pm 0.0123}$ \\
        SWR + re-init (Ours)& $\mathbf{0.9829 \pm 0.0002}$ & $0.7269 \pm 0.0027$ & $\mathbf{0.4133 \pm 0.0058}$ & $0.4451 \pm 0.0028$ & $0.5165 \pm 0.0070$ \\
        \end{tabular}%
    }
    \caption{\textbf{Results on generalization with learning rate decay.} The final test accuracy after training 200 epochs. The learning rate initialized with 0.001 and divided by 10 at epoch 100 and 150. }
    \label{tab:generalization_lr_decay}
\end{table}

\section{Additional results}
\label{app:additional_results}

\begin{figure}[!h]
    \centering
    \begin{subfigure}[b]{\textwidth}
        \includegraphics[width=0.48\textwidth]{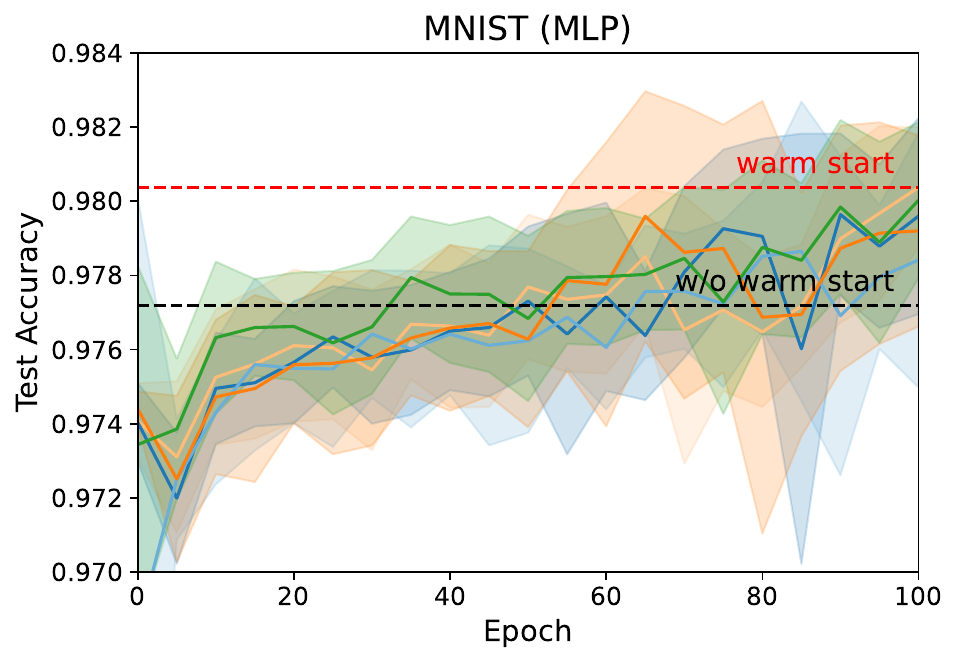}
        \includegraphics[width=0.48\textwidth]{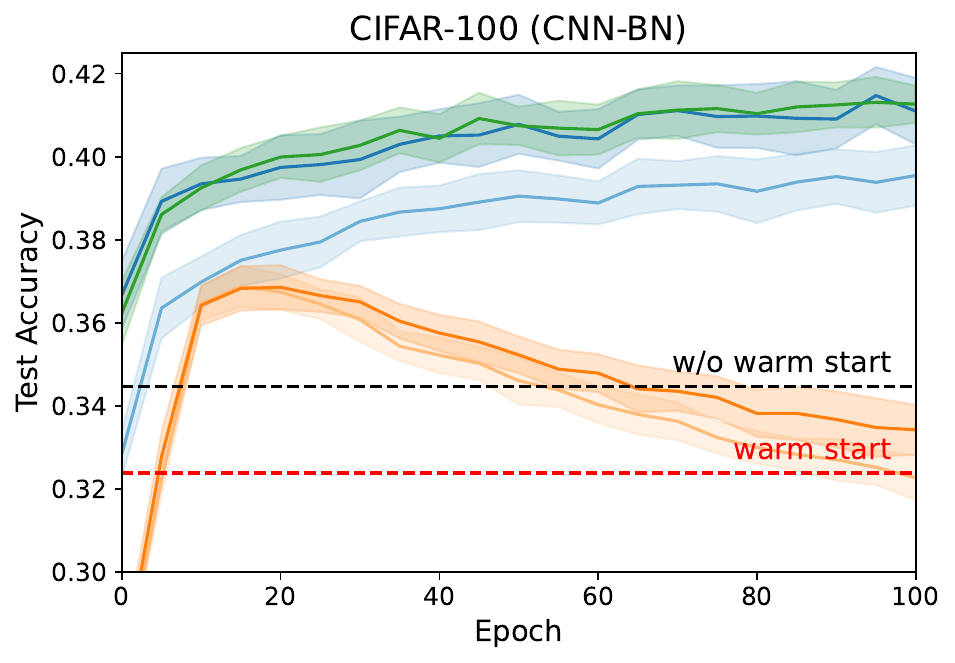}
        
    \end{subfigure}

    \begin{subfigure}[b]{0.6\textwidth}
        \centering
        \includegraphics[width=\textwidth]{figures/sources/warm_start/legend.pdf}
    \end{subfigure}
    
    \caption{\textbf{Additional results on warm-starting.}}
    \label{fig:warm_start_add}
\end{figure}
\begin{figure}[!h]
    \centering
    \begin{subfigure}[b]{\textwidth}
        \includegraphics[width=0.48\textwidth]{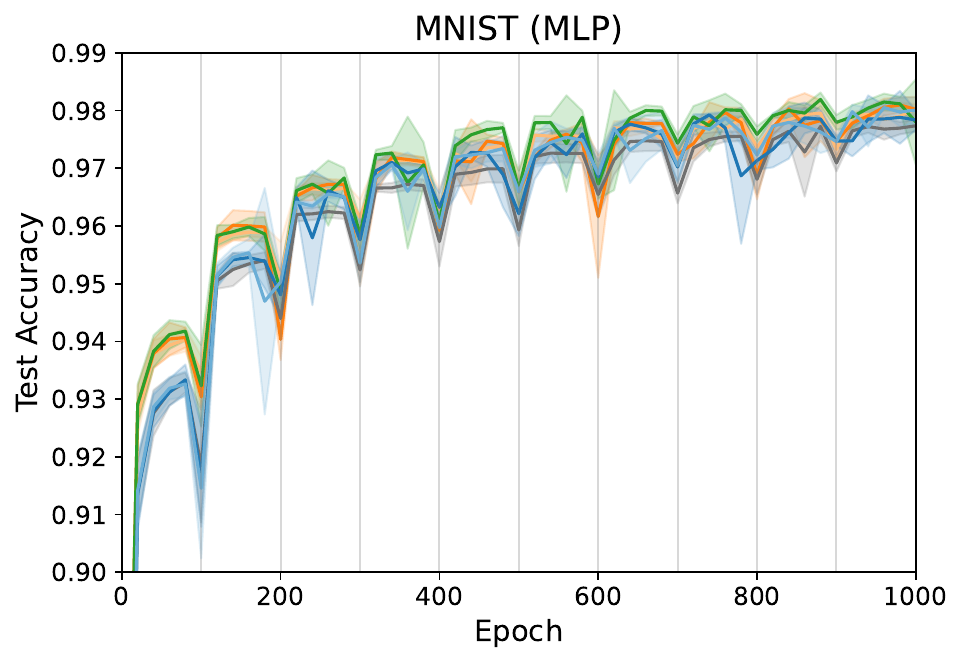}
        \includegraphics[width=0.48\textwidth]{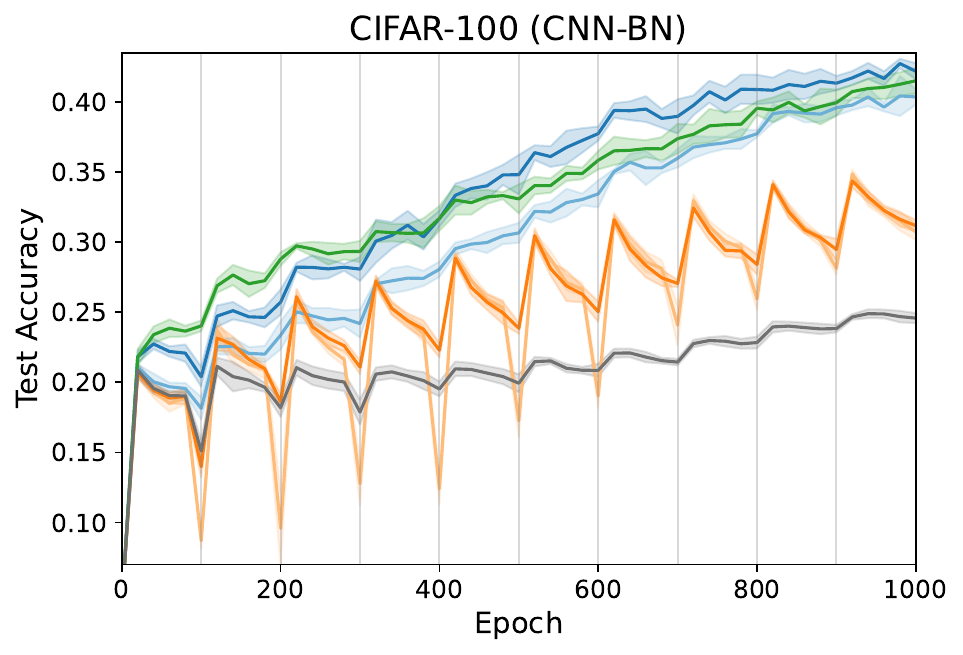}
        
    \end{subfigure}

    \begin{subfigure}[b]{0.6\textwidth}
        \centering
        \includegraphics[width=\textwidth]{figures/sources/continual/legend.pdf}
    \end{subfigure}
    
    \caption{\textbf{Additional results on continual full access setting.}}
    \label{fig:continual_add}
\end{figure}
\begin{figure}[!h]
    \centering
    \begin{subfigure}[b]{\textwidth}
        \includegraphics[width=0.48\textwidth]{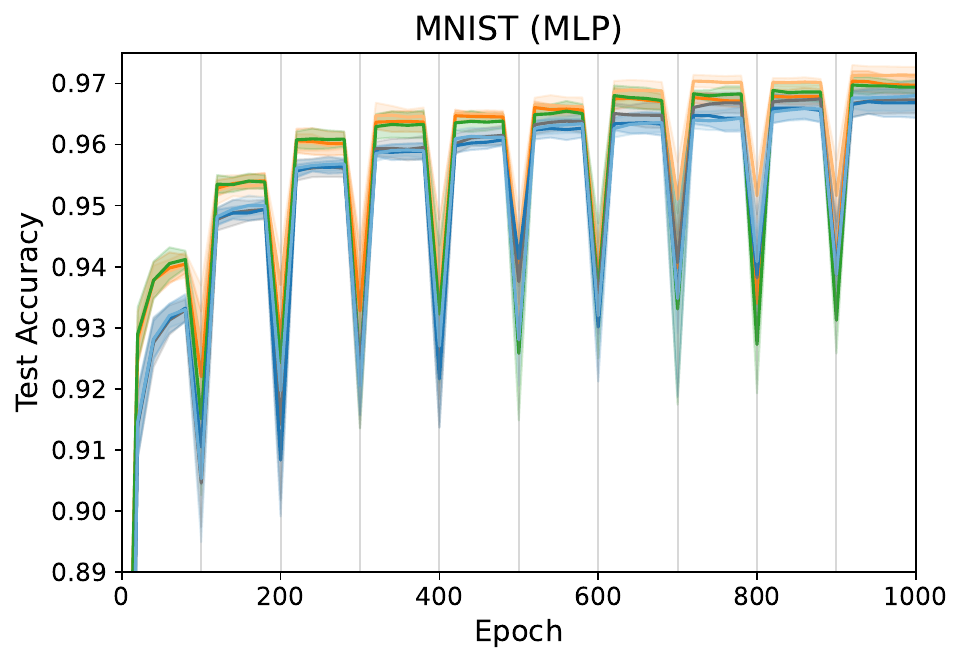}
        \includegraphics[width=0.48\textwidth]{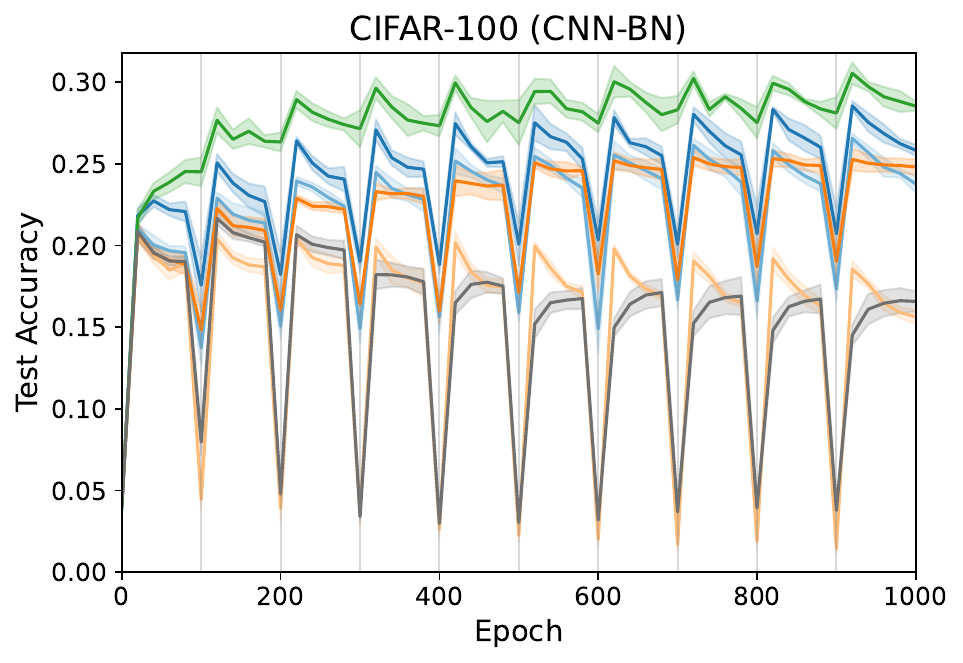}
        
    \end{subfigure}

    \begin{subfigure}[b]{0.6\textwidth}
        \centering
        \includegraphics[width=\textwidth]{figures/sources/limited/legend.pdf}
    \end{subfigure}
    
    \caption{\textbf{Additional results on continual limited access setting.}}
    \label{fig:limited_add}
\end{figure}

\end{document}